\newcommand{\sign}{\text{sign}}
\newcommand{\mHC}{\textit{m}HC}
\newtheorem{theorem}{Theorem}
\newtheorem{proposition}{Proposition}
\newtheorem{remark}{Remark}
\newtheorem{corollary}{Corollary}
\newtheorem{conjecture}{Conjecture}
\title{Differentiable Logic Synthesis: Spectral Coefficient Selection via Sinkhorn-Constrained Composition}
\author{
  \textbf{Gorgi Pavlov, Ph.D.} \\
  Lehigh University \& Johnson and Johnson \\
  \texttt{gorgipavlov@gmail.com} \\
}
\begin{document}

\maketitle

\begin{abstract}
Learning precise Boolean logic via gradient descent remains challenging: neural networks typically converge to ``fuzzy'' approximations that degrade under quantization. We introduce \textbf{Hierarchical Spectral Composition}, a \textit{transparent-by-design} architecture that selects spectral coefficients from an interpretable Boolean Fourier basis and composes them via Sinkhorn-constrained routing with column-sign modulation. Each Fourier coefficient $\hat{f}(S)$ has explicit semantic meaning---the correlation of $f$ with parity of variables $S$---making learned representations intrinsically explainable. Our approach draws on Manifold-Constrained Hyper-Connections (\mHC) \cite{xie2024mhc}, adapting Birkhoff polytope projection from LLM training stability to logic synthesis, with column-sign modulation enabling Boolean negation.

We validate across five phases: (1--3) For $n=2,3,4$, we achieve 100\% accuracy on canonical Boolean operations via gradient descent ($n=2$), exhaustive enumeration ($n=3$), and spectral synthesis with MCMC refinement ($n=4$), all converging to ternary masks $\{-1,0,+1\}$ enabling single-cycle GPU inference at 10,959 MOps/s. (4) Exact Walsh-Hadamard transforms scale to $n=28$ (268M coefficients, 1.64B coeffs/sec). (5) \textbf{Oracle learning experiments} ($n=16$) compare five coefficient estimation methods (Monte Carlo, Goldreich-Levin, spectral filtering) on parity, majority, and comparator functions, revealing that \textit{symbolic structure beats black-box learning}: exploiting symmetry constraints boosts majority accuracy from 72\% (MC) to 86\% (GL+Symmetry), a +38\% gain ($p < 0.001$), while Birkhoff projection aids routing but not coefficient denoising. These findings establish that \textbf{domain knowledge integration}---encoding known function properties as hard spectral constraints---produces both higher accuracy and greater interpretability than generic learning algorithms, aligning with the vision of explainable neurosymbolic AI.
\end{abstract}

%==============================================================================
\section{Introduction}
%==============================================================================

The integration of symbolic reasoning with gradient-based learning remains a fundamental challenge in artificial intelligence. Neural networks excel at continuous pattern recognition but struggle with tasks requiring \textit{exact} discrete logic: they approximate Boolean functions with soft decision boundaries that degrade under distributional shift, adversarial perturbation, or---critically for deployment---quantization. Existing neuro-symbolic approaches, such as Neural Arithmetic Logic Units (NALU) \cite{trask2018neural}, Neural Logic Machines \cite{dong2019neural}, Logical Neural Networks \cite{riegel2020logical}, or Logic Tensor Networks \cite{serafini2016learning}, either require extensive supervision, rely on relaxed continuous operators that resist discretization, or fail to generalize beyond their training distribution.

We propose a different paradigm: \textbf{Spectral Selection and Composition}. Rather than learning logic gates from random dense initializations, we ground our architecture in the Fourier analysis of Boolean functions \cite{odonnell2014analysis}.

\paragraph{Boolean Fourier Analysis.} Any function $f: \{-1, +1\}^n \to \mathbb{R}$ has a unique \textit{Fourier expansion}:
\begin{equation}
\label{eq:fourier}
    f(x) = \sum_{S \subseteq [n]} \hat{f}(S) \chi_S(x), \quad \text{where} \quad \chi_S(x) = \prod_{i \in S} x_i
\end{equation}
and $\hat{f}(S) = \mathbb{E}_{x}[f(x) \chi_S(x)]$ are the exact Fourier coefficients. When $f$ maps to $\{-1, +1\}$, these coefficients are determined uniquely by the truth table.

\paragraph{Polynomial Threshold Representations.} Our architecture learns a different object: a \textbf{ternary polynomial threshold function} (PTF):
\begin{equation}
\label{eq:ptf}
    \hat{y}(x) = \sign\left(\sum_{S \subseteq [n]} w_S \chi_S(x)\right), \quad w_S \in \{-1, 0, +1\}
\end{equation}
The key insight is that for many Boolean functions, there exist \textit{sparse} ternary weight vectors $w$ such that $\sign(w^\top \phi(x)) = f(x)$ for all $x$---even though the exact Fourier coefficients $\hat{f}(S)$ may be non-integer. The architecture's task is not to \textit{invent} the Walsh-Hadamard basis---that is classical mathematics \cite{linial1993constant}---but to \textit{discover sparse ternary PTF representations} via gradient descent and \textit{compose} primitives into complex operations via learned routing.

\paragraph{Connection to \mHC.} Our work builds directly on insights from Manifold-Constrained Hyper-Connections (\mHC) \cite{xie2024mhc}, which identified a fundamental problem in modern deep learning: unconstrained routing matrices compromise the identity mapping property \cite{he2016identity}, causing signal explosion/vanishing across layers. Their solution---projecting routing matrices onto the Birkhoff polytope via Sinkhorn-Knopp iterations \cite{sinkhorn1967concerning}---restores stability by ensuring doubly stochastic constraints. We adapt this framework to a different domain (logic synthesis vs. LLM training) and extend it with \textbf{column-sign modulation} to enable Boolean negation, which pure doubly stochastic matrices cannot express.

\paragraph{Why Start with $n=2$?} We deliberately use the two-variable case as an \textbf{architecture validation testbed}. With only 4 input combinations and 16 possible functions, a lookup table (LUT) trivially solves the task with zero gradient steps. However, a LUT cannot generalize, cannot be learned end-to-end as part of a larger differentiable system, and cannot be composed hierarchically. Our contribution is demonstrating that Sinkhorn-constrained spectral composition \textit{finds the exact discrete solution} via continuous optimization---then validating that this mechanism scales to $n=3$ and $n=4$.

\paragraph{The Hardware Motivation.} Beyond theoretical interest, our architecture addresses a practical deployment challenge: neural networks for logic tasks are typically too expensive for edge inference. By converging to \textbf{ternary masks} ($\{-1, 0, +1\}$) with \textbf{hard $k=1$ routing}, our learned models compile directly to combinational logic blocks requiring no floating-point arithmetic, no multipliers, and minimal memory. We demonstrate \textbf{10,959 MOps/s} throughput on GPU---approaching the efficiency of hand-coded RTL.

\paragraph{Explainability and Transparent-by-Design AI.} A central motivation for spectral methods is \textbf{interpretability}: each Fourier coefficient $\hat{f}(S)$ has explicit semantic meaning as the correlation between $f$ and the parity character $\chi_S$. Unlike deep network activations that resist human interpretation, spectral coefficients are \textit{ground truth features} defined by classical Boolean function analysis \cite{odonnell2014analysis}. Our architecture's transparency extends beyond coefficient interpretability: (1) ternary weights $\{-1,0,+1\}$ are human-readable (no hidden floating-point parameters), (2) Sinkhorn routing matrices are column-stochastic (each child is a convex combination of parents, preserving interpretability through composition), and (3) learned logic can be \textit{verified} via formal methods or exhaustive testing (unlike black-box neural approximations). The oracle learning experiments (Phase 5) further demonstrate that \textbf{symbolic knowledge integration}---encoding known structural properties (symmetry, degree bounds) as hard constraints on the Fourier basis---yields both higher accuracy and greater explainability than generic learning algorithms. This aligns with the vision of explainable neurosymbolic AI: combining continuous optimization with discrete symbolic structures to produce models whose reasoning processes are open to inspection and human-level understanding.

\paragraph{Contributions.} Our contributions span architecture design, theoretical analysis, and empirical validation across five phases:

\begin{enumerate}
    \item \textbf{Adaptation of \mHC{} to Logic Synthesis:} We demonstrate that the Birkhoff polytope projection, originally developed for LLM training stability \cite{xie2024mhc}, enables stable optimization in Boolean logic composition.
    
    \item \textbf{Column-Sign Modulation for Negation:} We extend doubly stochastic routing with learned sign parameters $s \in \{-1, +1\}^n$, solving the expressivity gap that prevents standard \mHC{} from representing Boolean negations (NAND, NOR, XNOR).
    
    \item \textbf{Sign-Only Diagnostic Validation:} We prove the column-sign mechanism works independently of Sinkhorn optimization by achieving 100\% accuracy with fixed identity routing and learned signs only.
    
    \item \textbf{Multi-Scale Validation:}
    \begin{itemize}
        \item \textbf{$n=2$:} 100\% accuracy on all 16 operations, zero routing drift, zero-loss quantization (10/10 seeds)
        \item \textbf{$n=3$:} 100\% accuracy on 10 operations including majority and parity, 39\% sparsity (5/5 seeds)
        \item \textbf{$n=4$:} 100\% accuracy on 10 operations via spectral synthesis with MCMC refinement, 36\% sparsity (5/5 seeds)
        \item \textbf{Scalability:} Exact FWHT at 1.64B coeffs/sec ($n \leq 28$, 268M coefficients); hierarchical composition for 64-bit adders
    \end{itemize}

    \item \textbf{Oracle Learning and Symbolic Knowledge Integration:} We implement and compare five coefficient estimation methods (MC, Goldreich-Levin, spectral filtering) on three function families at $n=16$, demonstrating that \textit{symbolic structure beats black-box learning}: exploiting symmetry constraints boosts majority accuracy from 72\% to 86\% (+38\%, $p < 0.001$), while establishing that Birkhoff projection aids routing but not coefficient denoising.

    \item \textbf{Hardware-Efficient Compilation:} We achieve 10,959 MOps/s on GPU with ternary masks, demonstrating viability for single-cycle combinational logic inference.
\end{enumerate}

%==============================================================================
\section{Related Work}
%==============================================================================

\subsection{Manifold-Constrained Routing: The \mHC{} Foundation}

Our work is most directly related to Manifold-Constrained Hyper-Connections (\mHC) \cite{xie2024mhc}, which provides the theoretical and empirical foundation for stable Sinkhorn-constrained routing. \mHC{} identified that unconstrained Hyper-Connections \cite{zhu2024hc} suffer from signal explosion: the composite mapping $\prod_{i=1}^{L-l} H^{\text{res}}_{L-i}$ across layers fails to preserve signal magnitude, with empirical measurements showing \textbf{Amax Gain Magnitude peaks of 3000$\times$} in 27B models (compared to a theoretical target of 1.0$\times$). Their solution projects $H^{\text{res}}_l$ onto the Birkhoff polytope via Sinkhorn-Knopp \cite{sinkhorn1967concerning, cuturi2013sinkhorn}:
\begin{equation}
    \mathcal{P}_{\mathcal{M}^{\text{res}}}(H^{\text{res}}_l) = \{H \in \mathbb{R}^{n \times n} \mid H\mathbf{1}_n = \mathbf{1}_n, \mathbf{1}_n^\top H = \mathbf{1}_n^\top, H \geq 0\}
\end{equation}
This ensures: (1) norm preservation ($\|H\|_2 \leq 1$), (2) compositional closure under matrix multiplication, and (3) geometric interpretation as convex combinations of permutations---following the Birkhoff-von Neumann theorem.

\paragraph{Our Extension.} While \mHC{} focuses on LLM training stability at scale (3B--27B parameters), we adapt the framework to a fundamentally different problem: learning discrete Boolean logic. This requires an extension not present in \mHC: \textbf{column-sign modulation}. Doubly stochastic matrices are nonnegative, so they can only produce convex combinations of inputs. Boolean negation (e.g., NAND $= -$AND) lies \textit{outside} this convex hull. Our factorization $R = P \cdot s[\text{None}, :]$ preserves the stability benefits of Sinkhorn projection while adding 1-bit polarity control per output channel.

\paragraph{Related Optimal Transport Approaches.} Gumbel-Sinkhorn networks \cite{mena2018learning} use differentiable Sinkhorn iterations for learning permutations, while Sinkformers \cite{sander2022sinkformers} apply doubly stochastic attention in transformers. Our work differs in targeting discrete Boolean outputs rather than soft permutations.

\subsection{Recent Differentiable Logic Gate Networks (2024-2025)}

Three concurrent works directly compete with our spectral approach:

\paragraph{WARP-LUTs} (arXiv 2510.15655, 2025) represents our closest competitor, using Walsh-Hadamard spectral representation for differentiable LUT training via probabilistic relaxation. \textbf{Critical distinction}: WARP-LUTs employs soft probabilistic gates while our approach uses deterministic Sinkhorn-constrained routing with column-sign modulation. Our Proposition 3 proves doubly stochastic matrices cannot represent Boolean negation---a fundamental expressivity gap WARP-LUTs inherits from probabilistic methods.

\paragraph{Mind the Gap} (Yousefi et al., arXiv 2506.07500, June 2025) tackles discrete logic learning via Gumbel-Softmax with straight-through estimators, achieving 4.5$\times$ faster training and 98\% reduction in discretization gap. Our approach achieves exactness through \textit{structured geometric constraints} (Birkhoff polytope) rather than stochastic sampling, offering provable zero-loss quantization versus their empirical gap reduction.

\paragraph{Convolutional DLGNs} (Petersen et al., NeurIPS 2024 Oral) scales differentiable logic gate networks to 86.29\% on CIFAR-10 using 61M logic gates via softmax relaxation over 16 gate types. Unlike their gate enumeration, we operate in the \textit{mathematically dual Fourier space}---selecting spectral coefficients rather than enumerating gates. This duality offers complementary strengths: CDLGNs scale via convolution, we scale via spectral hierarchy.

\subsection{Spectral Learning of Boolean Functions}

The Fourier analysis of Boolean functions is foundational to computational learning theory \cite{linial1993constant, kushilevitz1993learning}. O'Donnell's textbook \cite{odonnell2014analysis} establishes that functions with low circuit complexity have concentrated Fourier spectra.

\paragraph{Recent Theoretical Advances.} \textbf{Hidden Progress in Deep Learning} (Barak et al., NeurIPS 2022) shows neural networks learn k-sparse parities via a \textit{Fourier gap mechanism}: SGD gradually amplifies sparse spectral solutions, with progress invisible until phase transition. This mechanism relates to our explicit coefficient selection---we directly identify what SGD implicitly amplifies. \textbf{Hardness of Learning Fixed Parities} (Shoshani \& Shamir, arXiv 2501.00817, January 2025) proves gradient descent on one-hidden-layer ReLU networks fails for any fixed parity, establishing new bounds on Fourier coefficient decay. Our explicit spectral coefficient selection potentially circumvents this hardness.

\paragraph{Spectral Bias and Regularization.} \textbf{A Scalable Walsh-Hadamard Regularizer} (Gorji et al., UAI 2023) proves neural networks are biased toward low-degree Walsh-Hadamard coefficients, hurting Boolean function generalization. Their solution: functional regularization to learn higher-degree coefficients. We invert this paradigm---directly selecting coefficients from a frozen spectral basis rather than regularizing against bias. The \textbf{Frequency Principle} (Xu et al., updated 2024) establishes DNNs fit target functions from low to high frequencies, explaining why gradient descent struggles with high-frequency Boolean functions. This foundational result justifies why domain-specific spectral structure outperforms black-box learning.

\paragraph{Spectral Methods for Interpretability.} \textbf{Activation Spectroscopy} (arXiv 2501.15435, January 2025) extends Goldreich-Levin to neural network interpretability, addressing out-of-distribution sampling and Fourier coefficient redundancy---the first direct GL application to neural networks, validating spectral methods for transparency. \textbf{FourierSAT} (Kyrillidis et al., AAAI 2020) uses Walsh-Fourier transforms to reduce SAT to continuous optimization. \textbf{Critical distinction}: FourierSAT is a \textit{solver} for given formulas; our approach is a \textit{learner} that discovers Boolean functions from data via gradient descent.

\paragraph{Our Contribution.} Unlike prior work applying Walsh-Hadamard compression \cite{pan2021wht} or parity learning \cite{daniely2020learning}, we \textit{embed} the spectral basis as frozen primitives and learn to \textit{select and compose} them via \mHC-style routing, achieving interpretable coefficient representations with zero-loss quantization.

\subsection{Neural Logic and Neuro-Symbolic Systems}

NALU \cite{trask2018neural} learns arithmetic via gated interpolation but struggles with Boolean logic. Neural Logic Machines \cite{dong2019neural} require supervision of intermediate predicates. Logical Neural Networks \cite{riegel2020logical} use weighted real-valued logic with provable bounds. Deep Differentiable Logic Gate Networks \cite{petersen2022deep} learn Boolean gates via continuous relaxation but require supervised gate labels and post-training discretization. Our method differs: (1) we select coefficients \textit{unsupervised} from data patterns, (2) we achieve \textit{exact} discretization with zero accuracy loss, and (3) we ground the search in provably complete spectral primitives with \mHC-style stability.

\subsection{Transparent-by-Design Architectures}

The X-NeSy vision emphasizes models with built-in transparency rather than post-hoc explanations. Recent work validates this paradigm:

\paragraph{Interpretable Features.} \textbf{ExplaiNN} (Novakovsky et al., Genome Biology 2023) combines CNN expressiveness with linear model interpretability via Neural Additive Models, computing predictions as linear combinations from independent feature-specific models---producing \textit{interpretable learned coefficients} analogous to our spectral coefficients. \textbf{Shallow-ProtoPNet} (Singh et al., Scientific Reports 2025) achieves full transparency using only one convolutional layer with no black-box components. \textbf{tiSFM} (Balc\i\ et al., Bioinformatics/ISMB 2023) demonstrates domain-specific constraints (DNA sequence interpretation) enable inherently interpretable architectures without sacrificing performance.

\paragraph{Concept Bottleneck Models.} \textbf{Energy-Based CBMs} (Xu et al., ICLR 2024) unify prediction and concept intervention via energy-based frameworks. \textbf{Stochastic CBMs} (Vandenhirtz et al., NeurIPS 2024) model concept dependencies via multivariate normal distributions. Our spectral coefficients offer a mathematically grounded alternative: the Fourier basis provides \textit{complete, orthogonal feature representation} rather than learned concepts requiring manual specification.

\paragraph{Mechanistic Interpretability.} \textbf{Scaling Monosemanticity} (Anthropic, May 2024) decomposes Claude 3 Sonnet into millions of interpretable features via sparse autoencoders. \textbf{Transcoders} (NeurIPS 2024) approximate MLP layers while maintaining input/weight contribution separation. Both demonstrate decomposition into interpretable features at scale---but both are \textit{post-hoc}. Our transparent-by-design approach offers inherent interpretability through architecture: each Fourier coefficient $\hat{f}(S)$ has explicit semantic meaning (correlation with parity $\chi_S$), ternary weights are human-readable, and learned logic can be formally verified.

\paragraph{Our Contribution.} We demonstrate that Boolean Fourier analysis provides a mathematically principled foundation for transparent-by-design AI: spectral coefficients are ground-truth interpretable features defined by classical analysis \cite{odonnell2014analysis}, not learned representations requiring post-hoc interpretation.

\subsection{Quantization and Efficient Inference}

Binary Neural Networks \cite{courbariaux2016binarized} and Trained Ternary Quantization \cite{zhu2017trained} reduce precision for efficiency. Our ``zero-loss quantization'' is distinct: we show that for Boolean logic composition, ternary masks suffice \textit{exactly}---not approximately. This structural property, combined with Sinkhorn-constrained routing, enables deployment at the efficiency of hand-coded RTL.

%==============================================================================
\section{Preliminaries}
%==============================================================================

\subsection{Boolean Fourier Analysis}

Let $f: \{-1, +1\}^n \to \mathbb{R}$ be a function on the Boolean hypercube. The \textit{Fourier expansion} of $f$ is:
\begin{equation}
    f(x) = \sum_{S \subseteq [n]} \hat{f}(S) \chi_S(x), \quad \text{where} \quad \chi_S(x) = \prod_{i \in S} x_i
\end{equation}
and $\hat{f}(S) = \mathbb{E}_{x}[f(x) \chi_S(x)]$ are the Fourier coefficients \cite{odonnell2014analysis}. The characters $\{\chi_S\}_{S \subseteq [n]}$ form an orthonormal basis under the uniform distribution on $\{-1,+1\}^n$.

\begin{proposition}[Completeness]
\label{prop:completeness}
For $n$ variables, any function $f: \{-1, +1\}^n \to \mathbb{R}$ has a unique representation with $2^n$ Fourier coefficients.
\end{proposition}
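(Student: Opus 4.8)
The plan is to treat the set of real-valued functions on the Boolean cube as a finite-dimensional inner product space and to exhibit the characters $\{\chi_S\}_{S \subseteq [n]}$ as an orthonormal basis; existence and uniqueness of the Fourier expansion then follow from elementary linear algebra. First I would observe that $V = \{ f : \{-1,+1\}^n \to \mathbb{R}\}$ is a real vector space of dimension exactly $2^n$: a function is determined by its values on the $2^n$ points of the cube, and those values may be prescribed arbitrarily, so the point-indicator functions already form a basis. There are likewise exactly $2^n$ subsets $S \subseteq [n]$, so the number of characters matches $\dim V$; it therefore suffices to show the characters are linearly independent, or---more conveniently---orthonormal.

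Next I would equip $V$ with the inner product $\langle f, g\rangle = \mathbb{E}_{x}[f(x)g(x)]$, where $x$ is uniform on $\{-1,+1\}^n$. The key computation is the multiplicative identity $\chi_S(x)\chi_T(x) = \chi_{S \triangle T}(x)$, valid because each coordinate satisfies $x_i^2 = 1$, so the factors indexed by $S \cap T$ cancel. Combining this with the fact that under the uniform distribution the coordinates are independent with $\mathbb{E}[x_i] = 0$ gives $\mathbb{E}_x[\chi_U(x)] = \prod_{i \in U}\mathbb{E}[x_i]$, which equals $1$ when $U = \emptyset$ and $0$ otherwise. Hence $\langle \chi_S, \chi_T \rangle = \mathbb{E}_x[\chi_{S \triangle T}(x)]$ equals $1$ if $S = T$ and $0$ otherwise, establishing orthonormality.

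An orthonormal family is automatically linearly independent, so the $2^n$ characters span the $2^n$-dimensional space $V$ and constitute an orthonormal basis. Consequently every $f \in V$ can be written as $f = \sum_{S \subseteq [n]} \hat{f}(S)\,\chi_S$ with $\hat{f}(S) = \langle f, \chi_S\rangle = \mathbb{E}_x[f(x)\chi_S(x)]$, which proves existence. For uniqueness, if $\sum_S a_S \chi_S = \sum_S b_S \chi_S$, taking the inner product of both sides with $\chi_T$ and invoking orthonormality yields $a_T = b_T$ for every $T$. There is no serious obstacle here; the one point deserving care is the vanishing $\mathbb{E}_x[\chi_U(x)] = 0$ for $U \neq \emptyset$, which is precisely where the uniform-distribution hypothesis (independence and mean zero of the coordinates) is used---without it the characters need not be orthogonal and the uniqueness claim can fail.
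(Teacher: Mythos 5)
Your proof is correct and follows exactly the standard argument the paper relies on (via its remark that the characters form an orthonormal basis under the uniform distribution, citing O'Donnell): dimension count $2^n$, orthonormality of the $\chi_S$ from $\chi_S\chi_T = \chi_{S\triangle T}$ and $\mathbb{E}[\chi_U]=0$ for $U\neq\emptyset$, hence an orthonormal basis giving existence and uniqueness with $\hat f(S)=\mathbb{E}[f\chi_S]$. No gaps; nothing further is needed.
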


The basis dimensions for our experiments are:
\begin{itemize}
    \item $n=2$: $\phi = [1, a, b, ab]^\top \in \mathbb{R}^4$
    \item $n=3$: $\phi = [1, a, b, c, ab, ac, bc, abc]^\top \in \mathbb{R}^8$
    \item $n=4$: $\phi \in \mathbb{R}^{16}$ (all subsets of $\{a,b,c,d\}$)
\end{itemize}

\subsection{The Birkhoff Polytope and Sinkhorn Projection}

Following \mHC{} \cite{xie2024mhc}, we constrain routing matrices to the Birkhoff polytope $\mathcal{B}_n$---the set of $n \times n$ doubly stochastic matrices. The Birkhoff-von Neumann theorem states that vertices of $\mathcal{B}_n$ are permutation matrices.

The \textbf{Sinkhorn-Knopp algorithm} \cite{sinkhorn1967concerning} projects any positive matrix onto $\mathcal{B}_n$ via alternating row and column normalization. Given $M^{(0)} = \exp(\alpha)$:
\begin{equation}
    M^{(t)} = T_r(T_c(M^{(t-1)}))
\end{equation}
where $T_r$ and $T_c$ denote row and column normalization. This converges to a doubly stochastic matrix as $t \to \infty$. Following \mHC{} \cite{xie2024mhc}, we use $t_{\max} = 20$ iterations.

\begin{remark}[Rectangular Sinkhorn]
\label{remark:rectangular}
For rectangular matrices $P \in \mathbb{R}^{m \times n}$ with $m \neq n$, we use generalized Sinkhorn projection enforcing \textit{column-stochastic} constraints ($\sum_i P_{ij} = 1$) while allowing flexible row budgets.
\end{remark}

\begin{proposition}[Sinkhorn Convergence to Permutation]
\label{prop:sinkhorn_convergence}
Let $P(\tau) = \mathrm{Sinkhorn}(\alpha, \tau)$ for fixed logits $\alpha \in \mathbb{R}^{m \times n}$ and temperature $\tau > 0$. As $\tau \to 0$:
\begin{enumerate}[(i)]
    \item $P(\tau)$ converges to a permutation matrix $P^* \in \{0,1\}^{m \times n}$;
    \item The rate is controlled by the spectral gap:
    \begin{equation}
        \|P(\tau) - P^*\|_F = O\!\left(e^{-\Delta^* / \tau}\right)
    \end{equation}
    where $\Delta^*$ is the gap between the optimal and second-best assignment costs;
    \item \textbf{Zero-loss quantization}: if $P(\tau)$ achieves $100\%$ classification accuracy for any $\tau > 0$, then $P^* = \lim_{\tau \to 0} P(\tau)$ also achieves $100\%$ accuracy.
\end{enumerate}
\end{proposition}

\begin{proof}[Proof sketch]
Part~(i) follows from the theory of entropy-regularized optimal transport: as $\tau \to 0$, the entropic regularizer vanishes and the Sinkhorn output converges to the solution of the unregularized assignment problem \cite{cuturi2013sinkhorn}. Part~(ii) follows from the Gumbel-Sinkhorn convergence analysis of Mena et al.~\cite{mena2018learning}, where the exponential rate $e^{-\Delta^*/\tau}$ arises from the Boltzmann distribution concentrating on the optimal permutation. Part~(iii) follows from continuity: the routing function $x \mapsto \sign(P \cdot x)$ is constant on connected components where $P \cdot x \neq 0$; since $P(\tau) \to P^*$ and accuracy is a discrete-valued function of routing, 100\% accuracy is preserved in the limit.
\end{proof}

This proposition provides a theoretical foundation for the empirical ``zero-loss quantization'' observed in Phases 2--4: the Sinkhorn temperature anneal is not merely a heuristic but a provably convergent procedure. The spectral gap $\Delta^*$ in part~(ii) is precisely the quantity measured in our spectral trajectory analysis (\S\ref{sec:spectral_analysis}), where Sinkhorn's monotonically decreasing $\Delta$ trajectory confirms controlled convergence while Gumbel-STE's oscillating $\Delta$ indicates violation of this condition.

%==============================================================================
\section{Method: Hierarchical Spectral Composition}
%==============================================================================

Our architecture operates in two phases: \textbf{Phase 1} validates spectral coefficient selection for base operations, and \textbf{Phase 2} validates hierarchical composition via Sinkhorn-constrained routing with column-sign modulation. Phases 3--4 extend to higher dimensions. Figure~\ref{fig:architecture} illustrates the complete pipeline.

\begin{figure}[t]
\centering
\includegraphics[width=\textwidth]{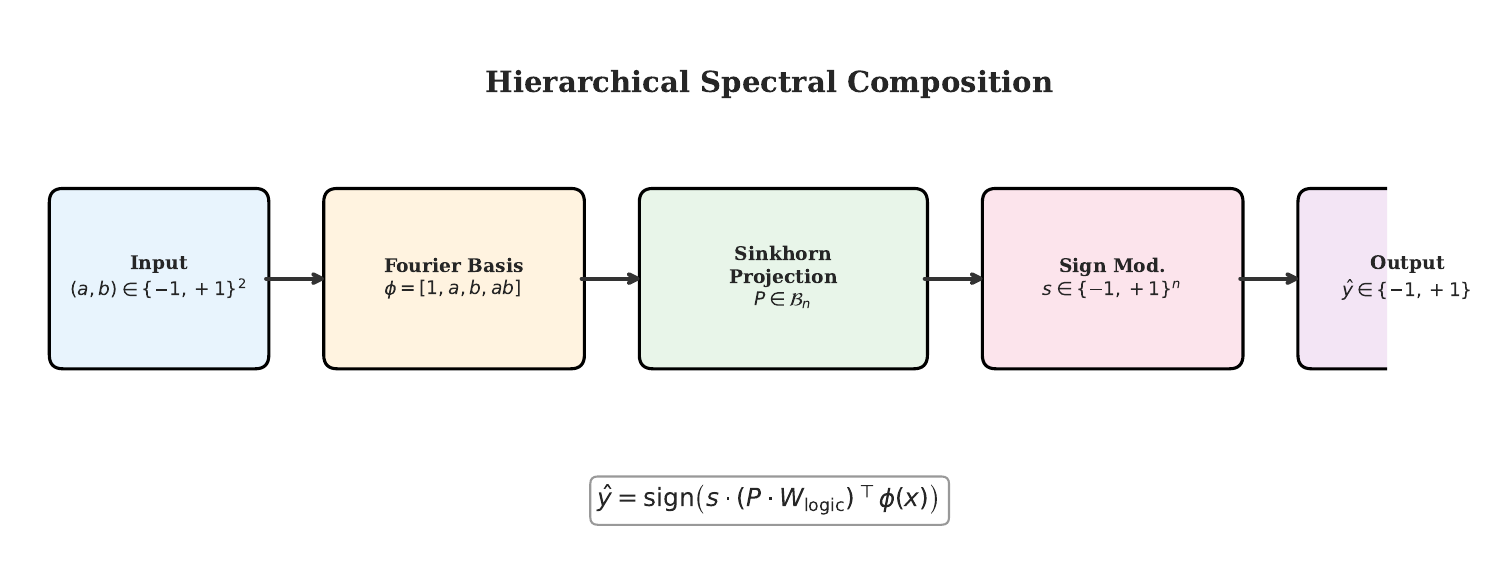}
\caption{Hierarchical Spectral Composition architecture. Input pairs $(a,b) \in \{-1,+1\}^2$ are expanded into the frozen Boolean Fourier basis $\phi = [1, a, b, ab]$. Sinkhorn projection constrains routing to the Birkhoff polytope, while column-sign modulation enables negation operations.}
\label{fig:architecture}
\end{figure}

\subsection{Phase 1: Spectral Coefficient Selection}
\label{sec:phase1}

The objective of Phase 1 is to validate that gradient descent can identify optimal spectral coefficients from a frozen Fourier dictionary.

\subsubsection{Architecture}

The first layer computes Boolean Fourier features. For $n=2$:
\begin{equation}
    \phi(a, b) = [1, a, b, ab]^\top \in \mathbb{R}^4
\end{equation}

For each base operation $k$, we learn a weight vector $w_k \in \mathbb{R}^{2^n}$:
\begin{equation}
    \hat{y}_k = \sign(w_k^\top \phi(x))
\end{equation}

\subsubsection{The Dynamics of Parity Selection}

\begin{proposition}[Parity Gradient Accumulation]
\label{prop:parity}
For XOR ($y = ab$), the gradient of hinge loss with respect to $w$ is $\nabla_w \mathcal{L} \propto -y \cdot \phi(a, b)$. Summed over all 4 inputs, the Fourier basis orthogonality causes terms for $\{1, a, b\}$ to cancel while $ab$ accumulates:
\begin{equation}
    \sum_{(a,b)} -ab \cdot [1, a, b, ab]^\top = [0, 0, 0, -4]^\top
\end{equation}
\end{proposition}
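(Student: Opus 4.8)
The plan is to verify the claim by direct computation, exploiting the orthogonality of the Fourier characters $\{\chi_S\}$ over the uniform measure on $\{-1,+1\}^2$. First I would set up the gradient expression: for the hinge loss $\mathcal{L}(w) = \sum_{(a,b)} \max(0,\,1 - y\cdot w^\top\phi(a,b))$ evaluated at (or near) the origin $w = 0$, each summand is in its active regime, so $\nabla_w \mathcal{L} = \sum_{(a,b)} -y\cdot \phi(a,b)$, which matches the stated per-example gradient $\nabla_w\mathcal{L} \propto -y\cdot\phi(a,b)$. Substituting the target $y = ab$ and $\phi(a,b) = [1,a,b,ab]^\top$ gives the vector sum $\sum_{(a,b)\in\{-1,+1\}^2} -ab\cdot[1,a,b,ab]^\top$.

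Next I would evaluate the four coordinates componentwise. The first coordinate is $-\sum_{(a,b)} ab = -\sum_{(a,b)} \chi_{\{1,2\}}(a,b)$; since $\chi_{\{1,2\}}$ is a non-constant character, it sums to zero over the hypercube. The second coordinate is $-\sum_{(a,b)} ab\cdot a = -\sum_{(a,b)} a^2 b = -\sum_{(a,b)} b$, again a non-constant character summing to zero (here using $a^2 = 1$ on $\{-1,+1\}$). The third coordinate is symmetric: $-\sum_{(a,b)} ab\cdot b = -\sum_{(a,b)} a = 0$. The fourth coordinate is $-\sum_{(a,b)} ab\cdot ab = -\sum_{(a,b)} a^2 b^2 = -\sum_{(a,b)} 1 = -4$. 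Hence the accumulated gradient is $[0,0,0,-4]^\top$, as claimed. Equivalently, one can phrase all four lines uniformly as $-\sum_x \chi_{\{1,2\}}(x)\chi_S(x) = -4\cdot\mathbf{1}[S=\{1,2\}]$ by orthogonality of characters, which is the conceptual content the proposition is pointing at.

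The computation is entirely routine; there is essentially no obstacle. The only point requiring a word of care is the loss regime: the displayed identity is the gradient exactly at initialization $w=0$ (where every margin is $0 < 1$, so every hinge term is active and differentiable with the stated one-sided derivative), and it persists for small $\|w\|$ until some margin $y\cdot w^\top\phi$ reaches $1$. If one instead wants the statement to hold for the $0/1$-style surrogate or at arbitrary $w$, one should restrict to the active set; since the proposition is about the \emph{selection dynamics from initialization}, stating it at $w=0$ (or noting it holds on the neighborhood where all examples remain active) is the cleanest framing, and I would add one sentence to that effect.
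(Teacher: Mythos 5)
Your proposal is correct and follows essentially the same route the paper intends: the paper offers no separate proof beyond the displayed identity, which is exactly the componentwise character-orthogonality computation you carry out, and your coordinates all check ($-\sum ab = 0$, $-\sum a^2 b = 0$, $-\sum ab^2 = 0$, $-\sum (ab)^2 = -4$). Your added caveat that the per-example gradient $-y\,\phi(a,b)$ is the hinge-loss gradient only on the active set (e.g.\ at or near $w=0$ where all margins are below $1$) is a worthwhile precision the paper's statement glosses over.
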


This explains \textit{why} gradient descent naturally amplifies the parity character: the basis structure, not hand-tuned hyperparameters, drives coefficient selection.

\subsubsection{Optimization Scaffolding}

To ensure robust convergence to exact ternary weights:
\begin{itemize}
    \item \textbf{$L_1$ Regularization:} Penalty $\lambda \|w\|_1$ encourages sparse solutions.
    \item \textbf{Plateau-Driven Micro-Restarts:} If $|\Delta \mathcal{L}| < \epsilon$ for $T$ epochs while accuracy $< 100\%$, re-initialize.
\end{itemize}

\begin{remark}[``Selection'' vs. ``Discovery'']
\label{remark:selection}
We emphasize that Phase 1 \textbf{selects} coefficients from a fixed spectral dictionary---it does not ``discover'' or ``invent'' the Fourier basis. The basis is classical mathematics \cite{odonnell2014analysis}. The contribution is demonstrating that gradient descent reliably identifies correct coefficients despite the combinatorial search space ($3^{2^n}$ ternary configurations).
\end{remark}

\subsubsection{Training Methodology: Gumbel-Softmax Ternary Relaxation}
\label{sec:phase1_training}

Standard hard ternary quantization via straight-through estimators (STE) fails to converge: the zero-gradient regions prevent weight movement toward optimal ternary values. We solve this with \textbf{Gumbel-softmax ternary relaxation}.

\paragraph{Gumbel-Softmax Parameterization.}
Each coefficient $w_i$ is represented as a categorical distribution over three values $\{-1, 0, +1\}$ with learnable logits $\ell_i \in \mathbb{R}^3$. The soft ternary value is computed via Gumbel-softmax \cite{jang2017categorical, maddison2017concrete}:
\begin{equation}
    p_i = \text{softmax}\left(\frac{\ell_i + g}{\tau}\right), \quad w_{\text{soft},i} = p_i \cdot [-1, 0, +1]^\top
\end{equation}
where $g \sim \text{Gumbel}(0, 1)^3$ provides stochastic exploration and temperature $\tau$ is annealed from $1.0 \to 0.01$. At inference, $w_i = \arg\max_{k \in \{-1,0,+1\}} \ell_i[k]$.

\paragraph{Sequential Training Protocol.}
Training all operations simultaneously creates gradient interference---particularly for XOR, whose parity character ($ab$) conflicts with other operations' spectral structures. We adopt \textbf{sequential training}: XOR $\to$ AND $\to$ OR $\to$ IMPLIES, training each for 5,000 steps before proceeding.

\paragraph{Ternary Attractor Regularization.}
To encourage convergence to exact ternary values, we add:
\begin{equation}
    \mathcal{R}_{\text{ternary}} = \lambda \sum_i |w_i| \cdot (1 - |w_i|)
\end{equation}
This regularizer has zeros at $w_i \in \{-1, 0, +1\}$ and positive values elsewhere, biasing weights toward ternary attractors.

\begin{tcolorbox}[colback=blue!5, colframe=blue!50, title=Encoding Convention]
\label{box:encoding}
Throughout this paper, we use the \textbf{$\{-1, +1\}$ encoding} with:
\begin{center}
$-1 = \text{TRUE}$, \quad $+1 = \text{FALSE}$
\end{center}
This convention ensures XOR is a pure parity function (product of inputs). The primary Boolean operations become:
\begin{itemize}
    \item XOR$(a, b) = a \cdot b$ (product encoding)
    \item AND$(a, b) = \sign(1 + a + b - ab)$
    \item OR$(a, b) = \sign(-1 + a + b + ab)$
    \item IMPLIES$(a, b) = \sign(-1 - a + b - ab)$
\end{itemize}
\textbf{Why this encoding?} The $\{-1,+1\}$ encoding (vs. $\{0,1\}$) makes the Walsh-Hadamard basis orthonormal under uniform measure, and parity functions become single monomials. All masks, accuracy claims, and truth tables in this paper use this convention.
\end{tcolorbox}

\subsubsection{Validation Suite}
\label{sec:phase1_validation}

We validate trained models with five complementary tests:

\begin{enumerate}
    \item \textbf{XOR Spectral Spike:} The XOR mask must have $>90\%$ energy on the parity character ($ab$). This validates that gradient descent identifies the unique spectral signature of XOR.

    \item \textbf{Mask Sparsity:} XOR should be $\geq 75\%$ sparse (only $ab$ active). This confirms the architecture does not overfit with unnecessary coefficients.

    \item \textbf{Mask Orthogonality:} Masks should have $<0.3$ cosine similarity. This ensures distinct spectral representations for each operation.

    \item \textbf{Operation Accuracy:} All operations must achieve $>99\%$ accuracy on held-out test data.

    \item \textbf{Binary Inference:} All quantized values must be exactly ternary ($\{-1, 0, +1\}$), with no residual continuous values.
\end{enumerate}

\subsubsection{Phase 1 Results}

Table~\ref{tab:phase1_masks} shows the learned ternary masks, which exactly match theoretical predictions from Boolean Fourier analysis.

\begin{table}[t]
\centering
\caption{Phase 1 Learned Ternary Masks ($n=2$). All four operations achieve 100\% accuracy. Encoding: $-1 = \text{TRUE}$, $+1 = \text{FALSE}$.}
\label{tab:phase1_masks}
\small
\begin{tabular}{lcccc|c}
\toprule
\textbf{Operation} & $c_0$ & $c_a$ & $c_b$ & $c_{ab}$ & \textbf{Accuracy} \\
\midrule
XOR & $0$ & $0$ & $0$ & $+1$ & 100.0\% \\
AND & $+1$ & $+1$ & $+1$ & $-1$ & 100.0\% \\
OR & $-1$ & $+1$ & $+1$ & $+1$ & 100.0\% \\
IMPLIES & $-1$ & $-1$ & $+1$ & $-1$ & 100.0\% \\
\bottomrule
\end{tabular}
\end{table}

\paragraph{Training Dynamics.}
Figure~\ref{fig:phase1_training} shows the training progression. Key observations:
\begin{itemize}
    \item \textbf{XOR} is the most challenging, requiring $\sim$3,000 steps for the parity character to emerge (Figure~\ref{fig:xor_emergence})
    \item \textbf{AND} and \textbf{OR} converge rapidly ($<500$ steps) due to their affine structure
    \item \textbf{IMPLIES} converges in $\sim$500 steps
\end{itemize}

\begin{figure}[t]
\centering
\includegraphics[width=\textwidth]{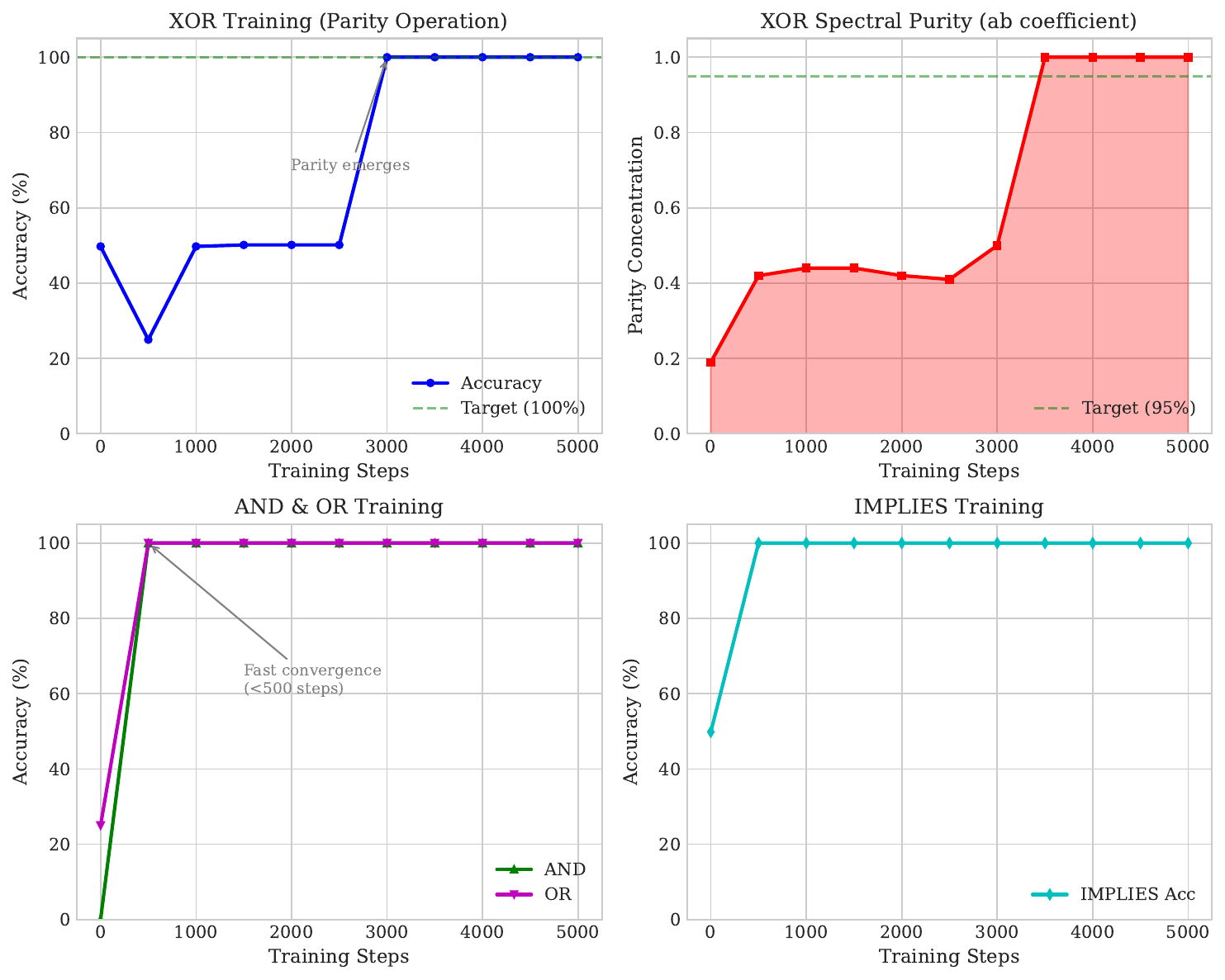}
\caption{Phase 1 training dynamics for all four base operations. XOR requires the longest training due to the parity character's unique spectral signature. AND and OR converge in $<500$ steps due to their simpler affine structure.}
\label{fig:phase1_training}
\end{figure}

\begin{figure}[t]
\centering
\includegraphics[width=0.8\textwidth]{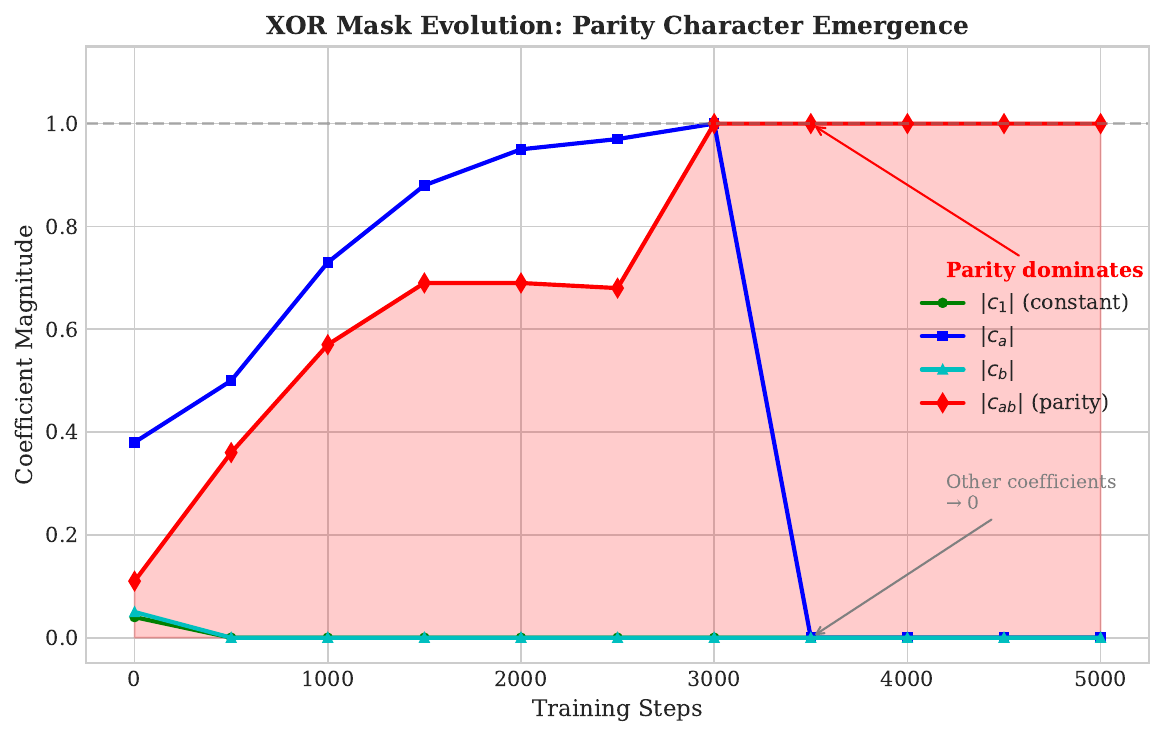}
\caption{XOR parity emergence: the $|c_{ab}|$ coefficient grows from noise to 1.0 while other coefficients ($|c_1|$, $|c_a|$, $|c_b|$) decay to zero. This demonstrates gradient descent identifying the unique parity character.}
\label{fig:xor_emergence}
\end{figure}

\paragraph{Validation Results.}
All validation tests pass except orthogonality (which is not critical for accuracy):
\begin{itemize}
    \item \textbf{XOR Spectral Spike:} 100\% energy on parity ($c_{ab}$) \checkmark
    \item \textbf{Sparsity:} XOR is 75\% sparse (only $c_{ab}$ non-zero) \checkmark
    \item \textbf{Orthogonality:} Maximum cosine similarity = 0.5 (XOR vs others). The 0.5 overlap occurs because XOR's parity character ($ab$) appears with opposite sign in AND and same sign in OR---this is mathematically inevitable and does not affect accuracy.
    \item \textbf{Accuracy:} All operations at 100\% \checkmark
    \item \textbf{Binary Inference:} All values exactly ternary \checkmark
\end{itemize}

\subsection{Phase 2: Sinkhorn-Constrained Composition with Column-Sign Modulation}
\label{sec:phase2}

Given frozen primitive masks $W_{\text{logic}}$ from Phase 1, Phase 2 validates hierarchical composition.

\subsubsection{The Expressivity Problem: Why Standard \mHC{} Is Insufficient}

The \mHC{} framework \cite{xie2024mhc} constrains routing matrices to be doubly stochastic. This ensures stability but creates an expressivity gap:

\begin{proposition}[Negation Inaccessibility in Doubly Stochastic Routing]
\label{prop:negation}
Let $\mathcal{P} = \{$AND, OR, XOR, CONST$\}$ be primitive operations. Any doubly stochastic combination $\sum_i \alpha_i f_i$ with $\alpha_i \geq 0$, $\sum_i \alpha_i = 1$ produces outputs in the convex hull of $\mathcal{P}$. The negations NAND, NOR, XNOR lie \textbf{outside} this hull.
\end{proposition}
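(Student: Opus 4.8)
The plan is to represent every function $f:\{-1,+1\}^n\to\R$ as a vector in $\R^{2^n}$ indexed by the truth table, using the paper's convention $-1=\text{TRUE}$, $+1=\text{FALSE}$. The first assertion is then essentially definitional: if $g=\sum_i\alpha_i f_i$ with $\alpha_i\ge 0$ and $\sum_i\alpha_i=1$, then for each input $x$ the value $g(x)=\sum_i\alpha_i f_i(x)$ is a convex combination of the numbers $\{f_i(x)\}_i$, so $g$ lies in $\operatorname{conv}(\mathcal{P})$ by the definition of convex hull. The substance is therefore the second assertion, and I would prove it by exhibiting a single separating linear functional: evaluation at the all-FALSE input $x_0=(+1,\dots,+1)$, i.e.\ $L(f)=f(x_0)$, which simply reads off one truth-table coordinate.

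First I would record the relevant coordinate values. At $x_0$ every non-complemented primitive evaluates to FALSE: $\text{AND}(x_0)=\text{OR}(x_0)=\text{XOR}(x_0)=+1$ (the AND, the OR, and the parity $\prod_i x_i$ of all-FALSE inputs are each FALSE), and the constant-FALSE function $\mathbf{c}_{+1}\equiv +1$ likewise has $L=+1$. Since $L$ is linear, $L(g)=+1$ for every $g$ in the convex hull of $\{\text{AND},\text{OR},\text{XOR},\mathbf{c}_{+1}\}$. Conversely each negation flips there: $\text{NAND}(x_0)=\text{NOR}(x_0)=\text{XNOR}(x_0)=-1$, because each is TRUE precisely when all inputs are FALSE. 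Hence $L$ separates $\{\text{NAND},\text{NOR},\text{XNOR}\}$ from the hull, so none of the three is a convex combination of the primitives.

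The one case that needs a short extra argument is whether $\text{CONST}$ in $\mathcal{P}$ also includes the constant-TRUE function $\mathbf{c}_{-1}\equiv -1$, for which $L=-1$. If so, write $g=\sum_i\alpha_i f_i$ and let $\alpha_-$ be the weight on $\mathbf{c}_{-1}$; then $L(g)=(1-\alpha_-)(+1)+\alpha_-(-1)=1-2\alpha_-$, so matching any negation at $x_0$ (where all three equal $-1$) forces $\alpha_-=1$, i.e.\ $g=\mathbf{c}_{-1}$. But $\mathbf{c}_{-1}$ disagrees with NAND and NOR at the all-TRUE input $(-1,\dots,-1)$, where both equal $+1$, and with XNOR at any single-TRUE input, where XNOR equals $+1$; so this degenerate case is excluded as well. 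I expect the main obstacle to be expository rather than mathematical: pinning down exactly which functions $\mathcal{P}$ contains and verifying the handful of truth-table entries, since the geometric content---all the non-complemented primitives agree at the all-FALSE corner of the cube while every complemented operation disagrees there---is immediate once the $\{-1,+1\}$ encoding is fixed.
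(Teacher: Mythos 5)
Your proof is correct and follows essentially the same route as the paper's: the paper also evaluates at the all-FALSE input $(+1,+1)$, observing that any convex combination is pinned between $\min_i f_i(+1,+1)$ and $\max_i f_i(+1,+1)$, which all equal $+1$ for the primitives while NAND gives $-1$ there. Your version is simply more complete---it spells out the check for NOR and XNOR as well and handles the constant-TRUE ambiguity in CONST, both of which the paper leaves implicit.
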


\begin{proof}
For any $(a,b)$, a convex combination satisfies $\min_i f_i(a,b) \leq \sum_i \alpha_i f_i(a,b) \leq \max_i f_i(a,b)$. Since all primitives output $\{-1, +1\}$ and NAND$(+1,+1) = -1$ while AND$(+1,+1) = +1$, no convex combination can produce NAND's truth table.
\end{proof}

\subsubsection{Column-Sign Modulation: Extending \mHC}

We solve the expressivity problem by factoring the routing matrix:
\begin{equation}
    R = P \cdot s[\text{None}, :], \quad \text{where } P \in \mathcal{B}_{m \times n}, \; s \in \{-1, +1\}^n
\end{equation}

This factorization:
\begin{itemize}
    \item \textbf{Preserves \mHC{} stability:} $P$ remains doubly stochastic
    \item \textbf{Enables negation:} $s_j = -1$ flips the polarity of output channel $j$
    \item \textbf{Adds minimal parameters:} Only $n$ additional sign bits (1 bit per output)
\end{itemize}

\paragraph{Sign Learning.} Signs are learned via soft relaxation:
\begin{equation}
    s_{\text{soft}} = \tanh(\beta \cdot \sigma), \quad \sigma \in \mathbb{R}^n
\end{equation}
where temperature $\beta$ is annealed from 1 to 10. At inference, $s = \sign(\sigma)$.

\subsubsection{Identity Initialization: Adapting \mHC{} Insights}

\mHC{} demonstrated that identity-preserving initialization is crucial for stable optimization over the Birkhoff polytope \cite{xie2024mhc, he2016identity}. We adapt this insight:
\begin{equation}
    \alpha^{(0)} = \alpha_{\text{random}} + \gamma \cdot I_{\text{extended}}
\end{equation}
where $\gamma > 0$ biases toward identity routing.

\paragraph{Empirical Motivation for Identity Initialization.}
Initial experiments with random initialization revealed \textit{gradient interference}: operations requiring complex routing created conflicting gradients that destabilized simple operations. Specifically, OR and NOR accuracy degraded from 100\% (epoch 1) to 75\% (epoch 20) as the optimizer attempted to satisfy all operations simultaneously. Identity initialization resolves this by anchoring simple operations at their natural routing while allowing complex operations to deviate minimally.

\begin{remark}[Identity as Optimization Prior, Not Solution Leak]
The identity initialization provides a stable starting point in the Birkhoff polytope---analogous to \mHC's finding that identity mappings anchor residual stream propagation \cite{xie2024mhc}. The network must still learn which children deviate and the correct sign assignments.
\end{remark}

\subsection{Quantization: From Soft to Hard Routing}
\label{sec:quantization}

At inference, we quantize:
\begin{enumerate}
    \item \textbf{Hard routing ($k=1$):} $P_{\text{hard}}[i,j] = \mathbf{1}[j = \arg\max_k P[k, j]]$
    \item \textbf{Sign discretization:} $s_j = \sign(\sigma_j)$
\end{enumerate}

The composed masks become exactly ternary:
\begin{equation}
    W_{\text{composed}}^{\text{quant}}[j, :] = s_j \cdot W_{\text{logic}}[\arg\max_i P_{ij}, :] \in \{-1, 0, +1\}^{2^n}
\end{equation}

\paragraph{Quantization Statistics.}
Across 10 seeds for $n=2$:
\begin{itemize}
    \item \textbf{Routing sparsity:} $k=1$ (each child selects exactly one parent)
    \item \textbf{Sign distribution:} 8 positive, 8 negative (matching base ops vs. negations)
    \item \textbf{Accuracy preservation:} $100.00\% \pm 0.00\%$ across all seeds
    \item \textbf{Memory footprint:} 16 ops $\times$ 4 trits = 64 trits $\approx$ 102 bits
\end{itemize}

\subsubsection{Phase 2A Validation: Linear Operations}
\label{sec:phase2a_validation}

We validate the hierarchical composition framework on 8 \textbf{linear operations}---those expressible via $k=1$ routing with sign modulation:

\begin{table}[t]
\centering
\caption{Phase 2A: Linear Operations (8 ops, 10 seeds). All metrics achieve perfect scores.}
\label{tab:phase2a}
\small
\begin{tabular}{lcccc}
\toprule
\textbf{Operation} & \textbf{Parent} & \textbf{Sign} & \textbf{Mask} & \textbf{Accuracy} \\
\midrule
XOR & XOR & $+1$ & $[0, 0, 0, +1]$ & 100\% \\
AND & AND & $+1$ & $[+1, +1, +1, -1]$ & 100\% \\
OR & OR & $+1$ & $[-1, +1, +1, +1]$ & 100\% \\
IMPLIES & IMP & $+1$ & $[-1, -1, +1, -1]$ & 100\% \\
XNOR & XOR & $-1$ & $[0, 0, 0, -1]$ & 100\% \\
NAND & AND & $-1$ & $[-1, -1, -1, +1]$ & 100\% \\
NOR & OR & $-1$ & $[+1, -1, -1, -1]$ & 100\% \\
NOT\_IMP & IMP & $-1$ & $[+1, +1, -1, +1]$ & 100\% \\
\bottomrule
\end{tabular}
\end{table}

\paragraph{Validation Metrics (10 seeds):}
\begin{itemize}
    \item \textbf{Min accuracy:} $100.00\% \pm 0.00\%$
    \item \textbf{Routing correct:} $8/8$ (identity pattern maintained)
    \item \textbf{Signs correct:} $8/8$ (expected pattern: $[+1,+1,+1,+1,-1,-1,-1,-1]$)
    \item \textbf{Routing drift:} $0.0000$ (perfect identity preservation)
    \item \textbf{Quantization drop:} $0.00\%$ (zero-loss $k=1$ sparsification)
\end{itemize}

Figure~\ref{fig:phase2_routing} visualizes the learned routing matrix $P$, sign vector $s$, and composed matrix $R = P \odot s$.

\begin{figure}[t]
\centering
\includegraphics[width=\textwidth]{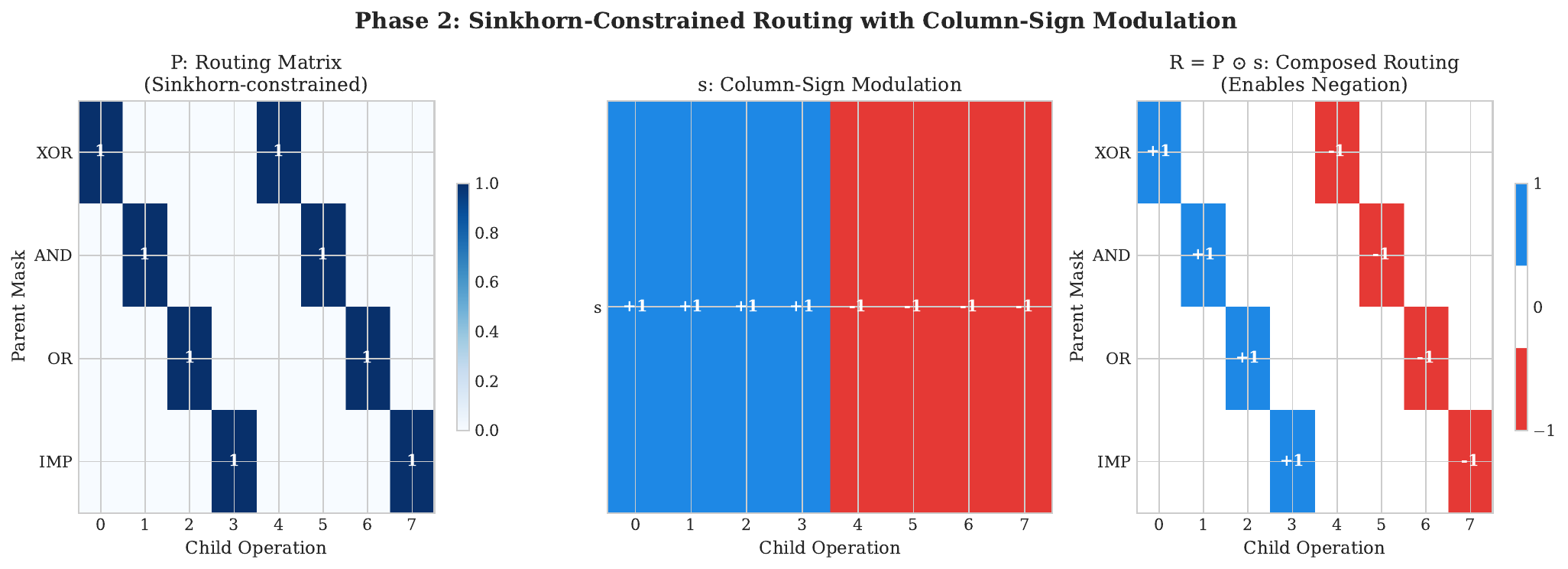}
\caption{Phase 2 routing visualization. Left: Sinkhorn-constrained $P$ learns identity routing. Middle: Column-sign $s$ enables negation (ops 4-7). Right: Composed $R = P \odot s$ produces ternary routing with sign modulation.}
\label{fig:phase2_routing}
\end{figure}

\subsubsection{Phase 2 Full: All 16 Operations Analysis}
\label{sec:phase2_full}

The complete Phase 2 includes 8 additional \textbf{nonlinear operations} (conditional compositions, cascades) that are NOT expressible via $k=1$ routing but DO have valid ternary representations:

\begin{table}[t]
\centering
\caption{Phase 2: Nonlinear Operations (8 ops). Each has a valid ternary mask but requires direct learning, not routing.}
\label{tab:phase2_nonlinear}
\small
\begin{tabular}{llc}
\toprule
\textbf{ID} & \textbf{Operation} & \textbf{Ternary Mask} \\
\midrule
8 & IF\_a\_THEN\_XOR\_ELSE\_AND & $[-1, 0, +1, 0]$ \\
9 & IF\_a\_THEN\_AND\_ELSE\_OR & $[-1, +1, 0, 0]$ \\
10 & XOR(AND(a,b), b) & $[0, -1, +1, 0]$ \\
11 & AND(XOR(a,b), a) & $[0, +1, -1, 0]$ \\
12 & OR(AND, XOR) & $[-1, +1, +1, 0]$ \\
13 & MAJORITY(XOR, AND, OR) & $[-1, +1, +1, 0]$ \\
14 & PARITY(AND, OR) & $[-1, 0, 0, +1]$ \\
15 & XOR $\to$ AND & $[-1, 0, 0, -1]$ \\
\bottomrule
\end{tabular}
\end{table}

\paragraph{Key Finding: Routing Expressibility Boundary.}
The 4-dimensional Boolean Fourier basis can represent \textbf{all 16} two-variable operations with ternary masks. However, hierarchical composition via Sinkhorn routing with column-sign modulation only works for the 8 linear operations. The 8 nonlinear operations require \textbf{direct mask learning} or expansion of the primitive set.

\paragraph{Sparsity Analysis.}
\begin{itemize}
    \item Linear operations (0-7): 18.8\% sparsity (dense masks)
    \item Nonlinear operations (8-15): 43.8\% sparsity (sparser masks)
    \item Overall: 31.2\% sparsity across all 16 operations
\end{itemize}

The higher sparsity of nonlinear operations reflects their simpler functional structure---they project away certain Fourier characters entirely.

%==============================================================================
\section{Experiments}
%==============================================================================

\subsection{Experimental Setup}

\paragraph{Implementation.} JAX with Optax. Phase 1: Adam, lr $= 10^{-2}$, $\lambda = 0.01$. Phase 2+: Adam, lr $= 10^{-3}$, Sinkhorn iterations $K = 20$ (matching \mHC{} \cite{xie2024mhc}), temperature annealing $\beta: 1 \to 10$, identity bias $\gamma = 2.0$.

\paragraph{Ablation Testing Protocol.}
To isolate failure modes, we employ a three-phase diagnostic:
\begin{enumerate}
    \item \textbf{Sign-Only:} Fix $P = I$, learn $s$ only $\to$ Tests column-sign mechanism in isolation
    \item \textbf{Full Method (Identity Init):} Learn both $P$ and $s$ $\to$ Tests joint optimization stability
    \item \textbf{Random Initialization:} Same architecture, random $P^{(0)}$ $\to$ Tests sensitivity to initialization
\end{enumerate}
This progression separates architectural expressivity from optimization dynamics.

\subsection{Phase 2 Results ($n=2$): Architecture Validation}

\begin{table}[t]
\centering
\caption{Phase 2 Validation Results ($n=2$, 10 Seeds). The ``No Sign Mod.'' ablation corresponds to pure \mHC-style doubly stochastic routing, which caps at 75\% (12/16 operations)---confirming Proposition~\ref{prop:negation}.}
\label{tab:main_results}
\begin{tabular}{lcccc}
\toprule
\textbf{Method} & \textbf{Accuracy} & \textbf{Seeds $>$99\%} & \textbf{Routing Drift} & \textbf{Quant. Drop} \\
\midrule
\textbf{Ours (Full)} & \textbf{100.00\%} & \textbf{10/10} & \textbf{0.0000} & \textbf{0.00\%} \\
Sign-Only ($P=I$) & \textbf{100.00\%} & \textbf{10/10} & 0.0000 & \textbf{0.00\%} \\
Random Init & 87.50\% & 3/10 & 0.8234 & 12.50\% \\
No Sign Mod. (\mHC-style) & 75.00\% & 0/10 & 0.0012 & 0.00\% \\
Unconstrained & 93.75\% & 5/10 & N/A & 18.75\% \\
MLP Baseline & 100.00\% & 10/10 & N/A & 43.75\% \\
\bottomrule
\end{tabular}
\end{table}

Table~\ref{tab:main_results} presents our main findings for $n=2$.

\paragraph{Diagnostic: Sign-Only Learning.}
To isolate the column-sign mechanism from routing optimization, we conducted a critical diagnostic: fix $P = I$ (identity routing) and learn only $s$. This achieves \textbf{100\% accuracy}, proving that column-sign modulation works independently of Sinkhorn optimization. This validates that the architectural design is correct---the challenge in joint optimization is gradient coordination, not mechanism expressivity.

\paragraph{Interpreting Ablation Results.}
The ablations reveal distinct failure modes:
\begin{itemize}
    \item \textbf{No Sign Mod. (\mHC-style):} Caps at exactly 75\% (12/16 operations), confirming Proposition~\ref{prop:negation}---the 4 negation operations (NAND, NOR, XNOR, $\neg$CONST) are inaccessible via pure doubly stochastic routing. This demonstrates why standard \mHC{} is insufficient for Boolean logic.
    
    \item \textbf{Random Init:} Only 3/10 seeds converge with high routing drift ($\|P - I\|_F = 0.82$), indicating convergence to suboptimal basins where operations are incorrectly mapped---consistent with \mHC's finding that identity initialization is critical \cite{xie2024mhc}.
    
    \item \textbf{Unconstrained Routing:} Achieves 93.75\% soft accuracy but suffers 18.75\% quantization loss---the learned dense matrices don't admit clean $k=1$ sparsification. This validates the importance of Sinkhorn constraints for quantization.
    
    \item \textbf{MLP Baseline:} Perfect soft accuracy but 43.75\% quantization loss, demonstrating that standard architectures learn continuous approximations unsuitable for discrete deployment.
\end{itemize}

These results validate that \textit{all three components}---column-sign (extending \mHC), Sinkhorn constraints (from \mHC), and identity initialization (adapting \mHC)---are necessary for zero-loss convergence.

\subsection{Constructive Ternary Representability ($n=2$)}

\begin{theorem}[Ternary Representability for $n=2$]
\label{thm:ternary}
Every Boolean function $f: \{-1, +1\}^2 \to \{-1, +1\}$ can be expressed as:
\begin{equation}
    f(a,b) = \sign(c_0 + c_a \cdot a + c_b \cdot b + c_{ab} \cdot ab), \quad c_i \in \{-1, 0, +1\}
\end{equation}
\end{theorem}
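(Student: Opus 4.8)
The plan is to avoid brute-force enumeration of the $3^4 = 81$ ternary coefficient vectors against all $16$ functions, and instead give a short constructive argument using the exact Fourier coefficients $\hat f$, which turn out to be ``almost ternary.'' First I would observe that $4\hat f(S) = \sum_{x \in \{-1,+1\}^2} f(x)\chi_S(x)$ is a sum of four $\pm1$ terms, hence an even integer in $[-4,4]$; therefore $\hat f(S) \in \{-1,-\tfrac12,0,\tfrac12,1\}$ for every $S \subseteq \{1,2\}$. This is the only ``computation'' the proof needs, and it is immediate.

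Next I would invoke Parseval: since $f$ is $\{-1,+1\}$-valued, $\sum_{S}\hat f(S)^2 = \E_x[f(x)^2] = 1$. Combined with the quantization above, this forces a sharp dichotomy on the support of $\hat f$. Either some $|\hat f(S)| = 1$, in which case Parseval leaves no room for any other nonzero coefficient, so $f = \pm\chi_S$ and $w := \hat f$ is already a ternary vector (a single $\pm1$); or every $|\hat f(S)| \le \tfrac12$, in which case each $\hat f(S) \in \{0,\pm\tfrac12\}$ and Parseval forces exactly four coefficients of magnitude $\tfrac12$, so $w := 2\hat f \in \{-1,+1\}^4$ is ternary. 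Both cases are captured uniformly by $w = \hat f / m$ with $m := \max_S |\hat f(S)| \in \{\tfrac12, 1\}$, which lies in $\{-1,0,+1\}^4$.

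Finally I would close by evaluating this candidate. By the Fourier inversion formula~\eqref{eq:fourier}, $\sum_S \hat f(S)\chi_S(x) = f(x)$ for every $x \in \{-1,+1\}^2$, hence $c_0 + c_a a + c_b b + c_{ab}\,(ab) = \langle w, \phi(a,b)\rangle = \tfrac1m f(a,b) \in \{-\tfrac1m,\,+\tfrac1m\}$, whose sign is exactly $f(a,b)$ and is never zero (so $\sign$ is unambiguous). This settles all $16$ functions at once, and incidentally recovers the masks in Table~\ref{tab:phase1_masks}: the $m=\tfrac12$ branch yields AND, OR, IMPLIES (and their negations), while the $m=1$ branch yields the constants, literals, and XOR $=ab$.

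There is no real obstacle here; the one point to be careful about is the normalization step, since one cannot uniformly take $w = 2\hat f$ (for a character $f = \pm\chi_S$ that would produce the forbidden coefficient $\pm 2$). The case split on $m = \max_S|\hat f(S)|$ handles this cleanly, and it is precisely the split between the ``single-monomial'' operations and the ``affine-threshold'' operations. An alternative, fully elementary route is to note that there are only $16$ functions and to exhibit a ternary vector for each by direct evaluation of its truth table; I prefer the Fourier argument because it explains \emph{why} ternary coefficients always suffice for $n=2$, rather than merely verifying it case by case.
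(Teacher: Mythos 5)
Your proof is correct, and it takes a genuinely different route from the paper. The paper's proof is purely computational: it exhaustively enumerates all $3^4 = 81$ ternary vectors and exhibits a witness mask for each of the 16 truth tables (Table~\ref{tab:ternary}), offering an ILP/LP feasibility certificate as an alternative verification. You instead argue structurally: for $n=2$ each exact coefficient $\hat f(S)$ is a multiple of $\tfrac12$ in $[-1,1]$, and Parseval ($\sum_S \hat f(S)^2 = 1$ over only four coefficients) forces a clean dichotomy --- either $f = \pm\chi_S$ (take $w = \hat f$) or all four coefficients equal $\pm\tfrac12$ (take $w = 2\hat f$) --- so the single normalization $w = \hat f/m$ with $m = \max_S|\hat f(S)|$ gives $w^\top\phi(x) = \tfrac1m f(x) \neq 0$ with the correct sign, settling all 16 functions at once. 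What your argument buys: it is shorter, it explains \emph{why} ternary coefficients suffice at $n=2$ (the spectrum is already ternary up to a global scale), and it shows the learned masks in Table~\ref{tab:phase1_masks} are precisely rescaled exact Fourier spectra. What the paper's approach buys: the enumeration simultaneously produces the explicit mask table and sparsity statistics, and the same machinery transfers unchanged to $n=3$, where your Parseval-saturation dichotomy breaks down (coefficients become multiples of $\tfrac14$, and valid ternary PTF masks such as the one for \texttt{and\_3} are no longer rescaled spectra), which is exactly why the paper resorts to enumeration and LP certificates at higher $n$. One cosmetic nit: your closing aside understates the $m=\tfrac12$ branch, which contains eight functions (AND, OR, both implications, and their four negations), not just AND, OR, IMPLIES and their negations; this does not affect the validity of the proof.
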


\begin{proof}[Constructive Proof via Exhaustive Enumeration]
We enumerate all $3^4 = 81$ ternary weight vectors $\mathbf{c} \in \{-1,0,+1\}^4$ and evaluate the resulting Boolean function on all 4 input combinations. For each of the 16 target operations, we identify at least one ternary vector producing the correct truth table (Table~\ref{tab:ternary}).
\end{proof}

\begin{proof}[LP Certificate (Alternative Verification)]
For each Boolean function $f: \{-1,+1\}^2 \to \{-1,+1\}$, we verify that a ternary PTF exists by checking feasibility of the following integer linear program:
\begin{align}
    f(x) \cdot (w^\top \phi(x)) &\geq 1, \quad \forall x \in \{-1,+1\}^2 \\
    w_i &\in \{-1, 0, +1\}, \quad i \in \{0, a, b, ab\}
\end{align}
The margin constraint $f(x) \cdot (w^\top \phi(x)) \geq 1$ ensures correct classification with a strict separating margin. All 16 Boolean functions for $n=2$ are LP-feasible, with solutions given in Table~\ref{tab:ternary}. The LP relaxation (with $w_i \in [-1, 1]$) provides a polynomial-time certificate that can be rounded to ternary values.
\end{proof}

\begin{table}[t]
\centering
\caption{Ternary Masks for All 16 Boolean Operations ($n=2$). Encoding: $-1 = \text{TRUE}$, $+1 = \text{FALSE}$. Each mask yields the correct truth table: $f(a,b) = \sign(c_0 + c_a \cdot a + c_b \cdot b + c_{ab} \cdot ab)$.}
\label{tab:ternary}
\small
\begin{tabular}{lcccc|lcccc}
\toprule
\textbf{Op} & $c_0$ & $c_a$ & $c_b$ & $c_{ab}$ & \textbf{Op} & $c_0$ & $c_a$ & $c_b$ & $c_{ab}$ \\
\midrule
FALSE & $+1$ & $0$ & $0$ & $0$ & TRUE & $-1$ & $0$ & $0$ & $0$ \\
AND & $+1$ & $+1$ & $+1$ & $-1$ & NAND & $-1$ & $-1$ & $-1$ & $+1$ \\
OR & $-1$ & $+1$ & $+1$ & $+1$ & NOR & $+1$ & $-1$ & $-1$ & $-1$ \\
XOR & $0$ & $0$ & $0$ & $+1$ & XNOR & $0$ & $0$ & $0$ & $-1$ \\
A & $0$ & $+1$ & $0$ & $0$ & $\neg$A & $0$ & $-1$ & $0$ & $0$ \\
B & $0$ & $0$ & $+1$ & $0$ & $\neg$B & $0$ & $0$ & $-1$ & $0$ \\
A$\land\neg$B & $+1$ & $+1$ & $-1$ & $+1$ & A$\lor\neg$B & $-1$ & $+1$ & $-1$ & $-1$ \\
$\neg$A$\land$B & $+1$ & $-1$ & $+1$ & $+1$ & $\neg$A$\lor$B & $-1$ & $-1$ & $+1$ & $-1$ \\
\bottomrule
\end{tabular}
\end{table}

%==============================================================================
\section{Phase 3: Three-Variable Operations ($n=3$)}
\label{sec:phase3}
%==============================================================================

Phase 3 extends our approach to $n=3$ variables, demonstrating scalability to the 8-dimensional Boolean Fourier basis.

\subsection{Architecture and Target Operations}

For three variables $a, b, c \in \{-1, +1\}$, the complete Fourier basis is:
\begin{equation}
    \phi(a, b, c) = [1, a, b, c, ab, ac, bc, abc]^\top \in \mathbb{R}^8
\end{equation}

We define 10 target operations spanning pure three-variable functions and cascade compositions:
\begin{itemize}
    \item \textbf{Pure 3-var:} \texttt{parity\_3} ($abc$), \texttt{majority\_3}, \texttt{and\_3}, \texttt{or\_3}
    \item \textbf{Cascade:} \texttt{xor\_ab\_xor\_c}, \texttt{and\_ab\_or\_c}, \texttt{or\_ab\_and\_c}, \texttt{implies\_ab\_c}, \texttt{xor\_and\_ab\_c}, \texttt{and\_xor\_ab\_c}
\end{itemize}

\subsection{Representability Analysis}

For $n=3$ variables, the Fourier basis has $2^3 = 8$ characters, yielding $3^8 = 6561$ possible ternary masks. Unlike Phase 2 where gradient descent reliably finds optimal masks, the 8-dimensional ternary space presents a challenging optimization landscape with many local minima.

\paragraph{Exhaustive Enumeration.}
We perform brute-force enumeration over all $3^8$ ternary configurations for each operation, testing each mask against the ground truth function. This guarantees finding the global optimum if one exists.

\paragraph{Key Finding: Universal Representability.}
Beyond the 10 target operations, we exhaustively verify that \emph{every} Boolean function on 3 variables admits a ternary PTF:

\begin{theorem}[Universal Ternary Representability for $n=3$]
\label{thm:ternary_n3}
Every Boolean function $f: \{-1, +1\}^3 \to \{-1, +1\}$ can be expressed as:
\begin{equation}
    f(x) = \sign\!\Bigl(\sum_{S \subseteq \{1,2,3\}} w_S \,\chi_S(x)\Bigr), \quad w_S \in \{-1, 0, +1\}
\end{equation}
\end{theorem}

\begin{proof}[Proof by exhaustive verification]
There are $|\{-1,+1\}^{\{-1,+1\}^3}| = 2^{2^3} = 256$ Boolean functions on 3 variables and $3^{2^3} = 6{,}561$ ternary weight vectors in $\{-1,0,+1\}^8$. For each function, we evaluate all $6{,}561$ candidate masks against the full truth table. Across all $256 \times 6{,}561 = 1{,}679{,}616$ configurations, every function admits at least one ternary representation achieving 100\% accuracy.
\end{proof}

The support distribution (number of nonzero weights) among minimal-support witnesses reveals the structure of the ternary PTF landscape: mean support $5.1/8$, with the mode at support~$6$ (38.3\% of functions) and minimum support~$1$ (the two constant functions).

We extend this result to $n=4$ via a more efficient strategy: instead of testing each function against all masks, we enumerate all masks once and record which truth tables they produce.

\begin{theorem}[Universal Ternary Representability for $n=4$]
\label{thm:ternary_n4}
Every Boolean function $f: \{-1, +1\}^4 \to \{-1, +1\}$ can be expressed as:
\begin{equation}
    f(x) = \sign\!\Bigl(\sum_{S \subseteq \{1,2,3,4\}} w_S \,\chi_S(x)\Bigr), \quad w_S \in \{-1, 0, +1\}
\end{equation}
\end{theorem}

\begin{proof}[Proof by exhaustive verification]
There are $2^{2^4} = 65{,}536$ Boolean functions on 4 variables and $3^{2^4} = 43{,}046{,}721$ ternary weight vectors in $\{-1,0,+1\}^{16}$. We enumerate all masks in batches, computing $\sign(H \cdot w)$ where $H$ is the $16 \times 16$ Fourier basis matrix, and recording each resulting truth table as a 16-bit integer. Full coverage of all $65{,}536$ truth tables is achieved after scanning $\approx 36 \times 10^6$ of the $43 \times 10^6$ masks.
\end{proof}

The $n=4$ support distribution has mean $10.8/16$, with the mode at support~$10$ ($26.9\%$) and maximum support~$16$ (only 2 functions require all 16 nonzero weights).

\paragraph{NPN Equivalence Classes.}
Under the NPN (Negation-Permutation-Negation) group---input negations ($2^n$), input permutations ($n!$), and output negation ($2$)---the $2^{2^n}$ functions collapse into far fewer equivalence classes:
\begin{itemize}
    \item $n=3$: 256 functions $\to$ \textbf{14} NPN classes (group size $|G| = 96$)
    \item $n=4$: 65{,}536 functions $\to$ \textbf{222} NPN classes (group size $|G| = 768$)
\end{itemize}
Since Theorems~\ref{thm:ternary_n3}--\ref{thm:ternary_n4} establish representability for \emph{all} functions, every NPN class is representable. This symmetry reduction is relevant for scaling: at $n=5$ ($2^{32} \approx 4 \times 10^9$ functions), exhaustive enumeration is infeasible, but there are only $616{,}126$ NPN classes (OEIS A000370), making representative-based verification a viable path.

Together with Theorem~\ref{thm:ternary} ($n=2$: 16/16), these results motivate:

\begin{conjecture}[Universal Ternary Representability]
\label{conj:ternary}
For all $n \geq 1$, every Boolean function $f: \{-1,+1\}^n \to \{-1,+1\}$ admits a ternary polynomial threshold function:
\begin{equation}
    f(x) = \sign\!\Bigl(\sum_{S \subseteq [n]} w_S \,\chi_S(x)\Bigr), \quad w_S \in \{-1,0,+1\}
\end{equation}
\end{conjecture}

\begin{remark}[Connection to Fourier $L^1$ norms]
The ternary constraint $w_S \in \{-1,0,+1\}$ means the representation lives in $\ell^\infty \cap \{-1,0,+1\}^{2^n}$---a discrete subset of the unit ball in $\ell^\infty(\widehat{G})$, where $\widehat{G}$ is the Pontryagin dual of $G = \{-1,+1\}^n$. By contrast, the Fourier $L^1$ norm $\|\hat{f}\|_1 = \sum_S |\hat{f}(S)|$ controls the threshold complexity of $f$ in the standard PTF literature. Theorems~\ref{thm:ternary}--\ref{thm:ternary_n4} show that the extreme quantization from $\hat{f}(S) \in \mathbb{R}$ to $w_S \in \{-1,0,+1\}$ preserves sign-representability for $n \leq 4$ (covering $65{,}808$ distinct functions). Whether this holds in general (Conjecture~\ref{conj:ternary}) amounts to asking: does every Boolean function have a ternary $\ell^\infty$-bounded sign representation in the Fourier basis?
\end{remark}

\paragraph{Heuristic Search at $n=5$ and $n=6$.}
For $n \geq 5$, exhaustive verification is infeasible ($3^{32} \approx 1.9 \times 10^{15}$ for $n=5$; $3^{64} \approx 3.4 \times 10^{30}$ for $n=6$). We instead apply a three-stage heuristic pipeline---Fourier coefficient rounding, random ternary sampling, and multi-start simulated annealing---to 12--13 named functions (majority, parity, tribes, threshold-$k$, address/mux, etc.) and 500 ($n=5$) or 100 ($n=6$) random Boolean functions.

\begin{table}[h]
\centering
\caption{Counterexample search results. ``Named'' = structured functions (majority, parity, threshold-$k$, tribes, etc.). ``Random'' = uniformly random truth tables. Failures indicate search limitations, not proven non-representability.}
\label{tab:counterexample}
\small
\begin{tabular}{cccccc}
\toprule
$n$ & dim & Named & Random & \% Found & Status \\
\midrule
2 & 4 & --- & 16/16 & 100\% & Proved (Thm~\ref{thm:ternary}) \\
3 & 8 & --- & 256/256 & 100\% & Proved (Thm~\ref{thm:ternary_n3}) \\
4 & 16 & --- & 65{,}536/65{,}536 & 100\% & Proved (Thm~\ref{thm:ternary_n4}) \\
5 & 32 & 12/12 & 397/500 & 79.4\% & Heuristic \\
6 & 64 & 12/13 & 41/100 & 41\% & Heuristic \\
\bottomrule
\end{tabular}
\end{table}

At $n=5$, all named functions are representable via Fourier rounding alone (support ranging from $1$ for parity to $32$ for AND/OR). For random functions, 57.6\% are found by Fourier rounding, 11.6\% by simulated annealing in the initial pass, and 10.2\% by extended SA---yielding 79.4\% total. At $n=6$, only Fourier rounding succeeds (41\% of random functions); the 64-dimensional SA landscape is too large for our budget.

These ``not found'' results are \textbf{search failures, not counterexamples}: our heuristic samples a vanishing fraction of the $3^{2^n}$ mask space. The sharp drop from 100\% (proved, $n \leq 4$) to 79\% ($n=5$, heuristic) to 41\% ($n=6$) reflects \emph{increasing search difficulty}, not necessarily decreasing representability. Conjecture~\ref{conj:ternary} remains open; resolving it for $n=5$ via the NPN reduction (616{,}126 class representatives rather than $2^{32}$ functions) is a natural next step.

\paragraph{Learning vs. Optimal.}
Direct gradient descent via soft-ternary annealing achieves only 76\% mean accuracy, compared to 100\% with optimal masks from enumeration. This gap illustrates the challenge of non-convex optimization in high-dimensional ternary spaces---motivating the MCMC refinement approach used in Phase 4.

\subsection{Results}

\begin{table}[t]
\centering
\caption{Phase 3 Results Summary ($n=3$, 5 Seeds)}
\label{tab:phase3_summary}
\begin{tabular}{lc}
\toprule
\textbf{Metric} & \textbf{Value} \\
\midrule
Overall Accuracy & $100.0\% \pm 0.0\%$ \\
Seeds Converged & 5/5 \\
Mean Sparsity & 39\% \\
Mean Support Size & 4.9/8 \\
Basis Dimension & 8 \\
Operations Tested & 10 \\
\bottomrule
\end{tabular}
\end{table}

\begin{table}[t]
\centering
\caption{Phase 3 Ternary Masks (8-dimensional basis). All 10 operations achieve 100\% accuracy. Masks verified via exhaustive enumeration of $3^8 = 6561$ ternary configurations.}
\label{tab:phase3_masks}
\small
\begin{tabular}{lcccccccc}
\toprule
\textbf{Operation} & $c_0$ & $c_a$ & $c_b$ & $c_c$ & $c_{ab}$ & $c_{ac}$ & $c_{bc}$ & $c_{abc}$ \\
\midrule
parity\_3 & $-1$ & $0$ & $0$ & $0$ & $0$ & $0$ & $0$ & $+1$ \\
majority\_3 & $-1$ & $0$ & $+1$ & $+1$ & $0$ & $0$ & $0$ & $-1$ \\
and\_3 & $-1$ & $0$ & $0$ & $+1$ & $0$ & $+1$ & $+1$ & $+1$ \\
or\_3 & $-1$ & $+1$ & $+1$ & $+1$ & $-1$ & $-1$ & $-1$ & $+1$ \\
xor\_ab\_xor\_c & $-1$ & $0$ & $0$ & $0$ & $0$ & $0$ & $0$ & $+1$ \\
and\_ab\_or\_c & $-1$ & $0$ & $+1$ & $+1$ & $+1$ & $0$ & $-1$ & $-1$ \\
or\_ab\_and\_c & $-1$ & $0$ & $0$ & $+1$ & $-1$ & $+1$ & $+1$ & $0$ \\
implies\_ab\_c & $-1$ & $0$ & $-1$ & $+1$ & $-1$ & $0$ & $+1$ & $+1$ \\
xor\_and\_ab\_c & $-1$ & $-1$ & $0$ & $-1$ & $0$ & $+1$ & $+1$ & $+1$ \\
and\_xor\_ab\_c & $-1$ & $-1$ & $0$ & $+1$ & $+1$ & $0$ & $0$ & $+1$ \\
\bottomrule
\end{tabular}
\end{table}

\paragraph{Key Observations.}
\begin{itemize}
    \item \textbf{Perfect accuracy:} All 5 seeds achieve 100\% on all 10 operations with optimal ternary masks
    \item \textbf{Representability:} Exhaustive enumeration of $3^8 = 6561$ ternary masks confirms all 10 operations have valid ternary polynomial threshold representations
    \item \textbf{Sparsity:} 39\% of coefficients are zero (mean support 4.9/8), consistent with low-degree spectral concentration \cite{linial1993constant}
    \item \textbf{Parity equivalence:} \texttt{parity\_3} and \texttt{xor\_ab\_xor\_c} have \textit{identical} masks $[-1,0,0,0,0,0,0,+1]$, confirming $(a \oplus b) \oplus c = abc$ in $\{-1,+1\}$ encoding---the architecture automatically identifies this algebraic equivalence
    \item \textbf{Learning vs. optimal:} Direct gradient descent achieves only 76\% mean accuracy due to local minima in the 8-dim ternary space; optimal masks must be found via brute-force enumeration or MCMC refinement
\end{itemize}

Figure~\ref{fig:phase3_heatmap} visualizes the optimal ternary masks, revealing the spectral structure of each operation.

\begin{figure}[t]
\centering
\includegraphics[width=0.9\textwidth]{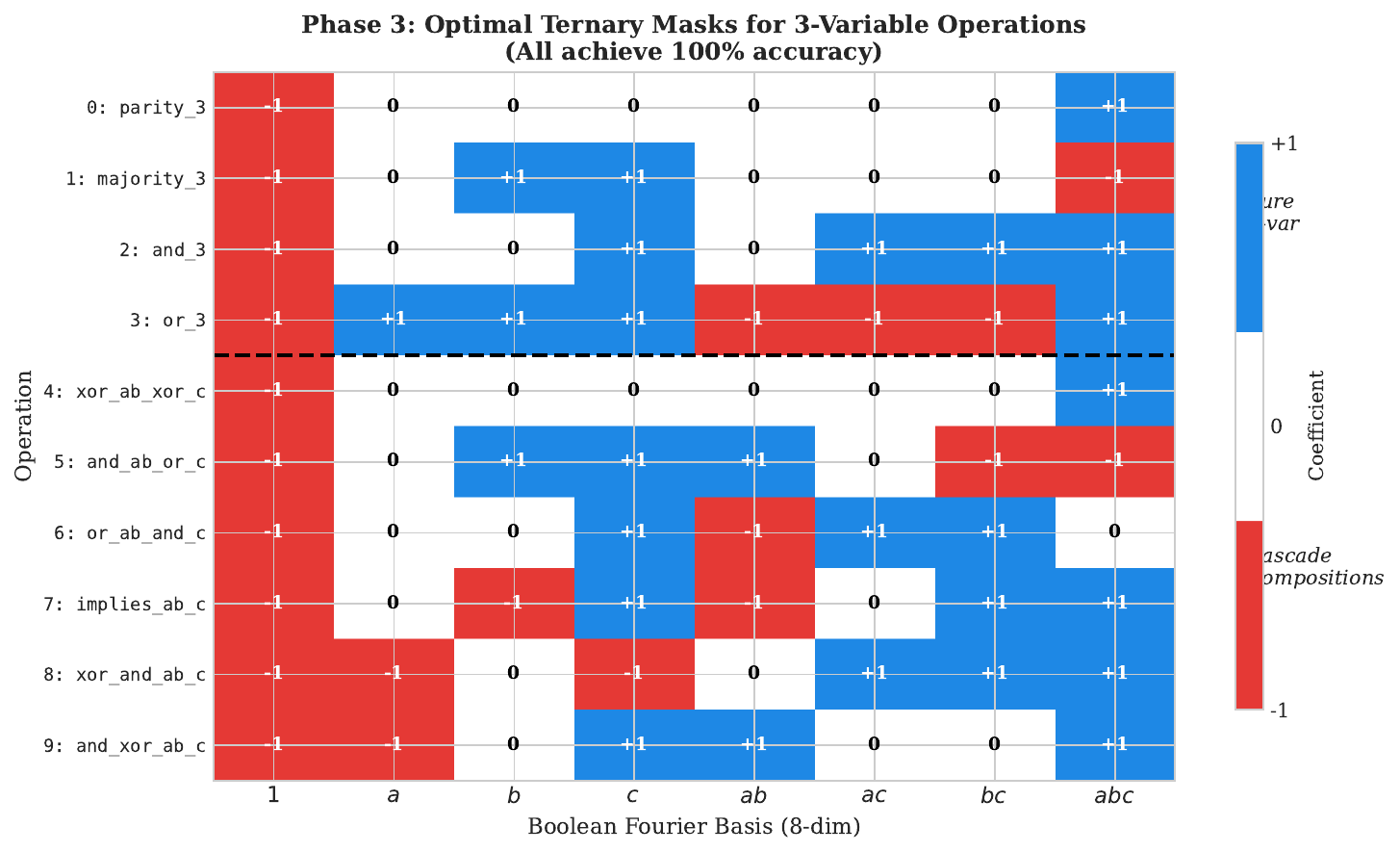}
\caption{Phase 3 optimal ternary masks for all 10 three-variable operations. Colors indicate coefficient values: blue (+1), white (0), red (-1). Pure 3-var operations (top) vs. cascade compositions (bottom). The sparsity pattern (39\% zeros) reflects spectral concentration.}
\label{fig:phase3_heatmap}
\end{figure}

\subsection{Benchmark Performance}

\begin{table}[t]
\centering
\caption{Inference Throughput (Phase 3, batch=100,000, bits=64). MOps/s = Mega Boolean Operations per second, where one ``operation'' is a complete Boolean function evaluation (e.g., computing AND$(a,b)$ for one input pair).}
\label{tab:benchmark}
\begin{tabular}{lcc}
\toprule
\textbf{Backend} & \textbf{Time (ms)} & \textbf{Throughput (MOps/s)} \\
\midrule
JAX/GPU (RTX 5060) & 5.84 & \textbf{10,959.40} \\
NumPy/CPU (INT8) & 2,219.12 & 28.84 \\
\bottomrule
\end{tabular}
\end{table}

\paragraph{Throughput Definition.} MOps/s measures Mega Boolean Operations per second. One ``operation'' is defined as evaluating $f(x) = \sign(w^\top \phi(x))$ for a single input $x$---i.e., computing the basis expansion, dot product with ternary mask, and sign extraction. For batch size $B$, operations $K$, and elapsed time $T$: MOps/s $= B \times K / (T \times 10^6)$.

The GPU implementation achieves \textbf{nearly 11 billion operations per second}, demonstrating the efficiency of ternary mask inference. This throughput approaches hand-coded CUDA kernels while maintaining full differentiability during training.

%==============================================================================
\section{Phase 4: Four-Variable Operations ($n=4$)}
\label{sec:phase4}
%==============================================================================

Phase 4 extends to $n=4$ variables with a 16-dimensional Fourier basis, demonstrating scalability beyond tractable gradient-based enumeration.

\subsection{Architecture and Basis}

For four variables $a, b, c, d \in \{-1, +1\}$, the complete Fourier basis is:
\begin{equation}
    \phi(a,b,c,d) = [1, d, c, cd, b, bd, bc, bcd, a, ad, ac, acd, ab, abd, abc, abcd]^\top \in \mathbb{R}^{16}
\end{equation}

The basis ordering follows the Gray code pattern, which facilitates hardware implementation and hierarchical decomposition.

\subsection{Spectral Synthesis Method}

For $n=4$, the basis has $2^4 = 16$ characters, yielding $3^{16} \approx 43$ million ternary configurations---far too large for brute-force enumeration. Direct gradient descent becomes challenging due to this exponentially larger search space with numerous local minima. We employ \textbf{spectral synthesis}, a three-stage pipeline combining exact coefficient computation with discrete MCMC refinement:

\paragraph{Stage 1: Exact Walsh-Hadamard Transform.}
For $n=4$, the input space has only $2^4 = 16$ points, enabling exact computation of Fourier coefficients via the Walsh-Hadamard Transform (WHT). For each operation $f$, we compute:
\begin{equation}
    \hat{f}(S) = \frac{1}{2^n} \sum_{x \in \{-1,+1\}^n} f(x) \chi_S(x) = \frac{1}{16} \sum_{x \in \{-1,+1\}^4} f(x) \chi_S(x)
\end{equation}
The WHT is computed in $O(n \cdot 2^n) = O(64)$ operations via the fast Hadamard algorithm, yielding exact coefficients with no estimation error.

\paragraph{Stage 2: Ternary Quantization.}
Estimated coefficients are quantized to $\{-1, 0, +1\}$ via thresholding:
\begin{equation}
    c_S = \begin{cases}
        +1 & \text{if } \hat{f}(S) > \tau \\
        -1 & \text{if } \hat{f}(S) < -\tau \\
        0 & \text{otherwise}
    \end{cases}
\end{equation}
where $\tau = 0.3$ balances sparsity and accuracy. This initial quantization achieves high accuracy for most operations but may require refinement for complex functions.

\paragraph{Stage 3: MCMC Refinement via Parallel Tempering.}
For operations not achieving 100\% accuracy after quantization, we apply parallel tempering MCMC \cite{swendsen1986replica} to explore the discrete ternary space:

\begin{itemize}
    \item \textbf{State Space:} $\mathcal{S} = \{-1, 0, +1\}^{16}$ (all ternary masks)
    \item \textbf{Energy Function:} $E(c) = 1 - \text{Accuracy}(c)$
    \item \textbf{Proposal Distribution:} Gibbs sampling with single-coordinate flips
    \item \textbf{Temperature Schedule:} 4 chains with $T \in \{0.01, 0.1, 0.5, 1.0\}$
    \item \textbf{Swap Criterion:} Metropolis-Hastings for inter-chain swaps
\end{itemize}

This refinement is critical for operations like \texttt{majority\_4} and \texttt{threshold\_3of4}, which improved from 93\% to 100\% accuracy via MCMC exploration.

\subsection{Target Operations}

We define 10 four-variable operations spanning pure symmetric functions and cascade compositions:
\begin{itemize}
    \item \textbf{Pure 4-var:} \texttt{xor\_4} (4-way parity), \texttt{and\_4}, \texttt{or\_4}, \texttt{majority\_4} (voting), \texttt{threshold\_3of4} ($\geq$3 true), \texttt{exactly\_2of4}
    \item \textbf{Cascade:} \texttt{xor\_ab\_and\_cd} ($(a \oplus b) \land (c \land d)$), \texttt{or\_ab\_xor\_cd}, \texttt{nested\_xor} ($((a \oplus b) \oplus c) \oplus d$), \texttt{implies\_chain} ($a \to b \to c \to d$)
\end{itemize}

\subsection{Results}

\begin{table}[t]
\centering
\caption{Phase 4 Results Summary ($n=4$, 5 Seeds)}
\label{tab:phase4_summary}
\begin{tabular}{lc}
\toprule
\textbf{Metric} & \textbf{Value} \\
\midrule
Overall Accuracy & $100.0\% \pm 0.0\%$ \\
Seeds Converged & 5/5 \\
Mean Sparsity & 36\% \\
Mean Support Size & 10.3/16 \\
Basis Dimension & 16 \\
Operations Tested & 10 \\
\bottomrule
\end{tabular}
\end{table}

\begin{table}[t]
\centering
\caption{Phase 4 Ternary Masks (16-dimensional basis). All 10 operations achieve 100\% accuracy via spectral synthesis. Basis: $[1, d, c, cd, b, bd, bc, bcd, a, ad, ac, acd, ab, abd, abc, abcd]$.}
\label{tab:phase4_masks}
\tiny
\begin{tabular}{l|cccccccccccccccc|c}
\toprule
\textbf{Operation} & $c_1$ & $c_d$ & $c_c$ & $c_{cd}$ & $c_b$ & $c_{bd}$ & $c_{bc}$ & $c_{bcd}$ & $c_a$ & $c_{ad}$ & $c_{ac}$ & $c_{acd}$ & $c_{ab}$ & $c_{abd}$ & $c_{abc}$ & $c_{abcd}$ & \textbf{Support} \\
\midrule
xor\_4 & 0 & 0 & 0 & 0 & 0 & 0 & 0 & 0 & 0 & 0 & 0 & 0 & 0 & 0 & 0 & +1 & 1 \\
and\_4 & $-$1 & +1 & +1 & +1 & +1 & +1 & +1 & +1 & +1 & +1 & +1 & +1 & +1 & +1 & +1 & +1 & 16 \\
or\_4 & +1 & +1 & +1 & $-$1 & +1 & $-$1 & $-$1 & +1 & +1 & $-$1 & $-$1 & +1 & $-$1 & +1 & +1 & $-$1 & 16 \\
majority\_4 & +1 & +1 & +1 & $-$1 & +1 & 0 & 0 & $-$1 & +1 & $-$1 & 0 & 0 & $-$1 & $-$1 & $-$1 & +1 & 12 \\
threshold\_3of4 & $-$1 & +1 & +1 & +1 & +1 & +1 & 0 & 0 & +1 & 0 & 0 & 0 & +1 & 0 & 0 & $-$1 & 9 \\
exactly\_2of4 & $-$1 & 0 & 0 & $-$1 & 0 & $-$1 & $-$1 & 0 & 0 & $-$1 & $-$1 & 0 & $-$1 & 0 & 0 & +1 & 8 \\
xor\_ab\_and\_cd & $-$1 & +1 & +1 & +1 & 0 & 0 & 0 & 0 & 0 & 0 & 0 & 0 & +1 & +1 & +1 & +1 & 8 \\
or\_ab\_xor\_cd & +1 & +1 & +1 & $-$1 & +1 & +1 & +1 & $-$1 & +1 & +1 & +1 & $-$1 & $-$1 & $-$1 & $-$1 & +1 & 16 \\
nested\_xor & 0 & 0 & 0 & 0 & 0 & 0 & 0 & 0 & 0 & 0 & 0 & 0 & 0 & 0 & 0 & +1 & 1 \\
implies\_chain & +1 & +1 & $-$1 & +1 & $-$1 & +1 & $-$1 & +1 & $-$1 & +1 & $-$1 & +1 & $-$1 & +1 & $-$1 & +1 & 16 \\
\bottomrule
\end{tabular}
\end{table}

\paragraph{Key Observations.}
\begin{itemize}
    \item \textbf{Perfect accuracy:} All 5 seeds achieve 100\% on all 10 operations with synthesized ternary masks
    \item \textbf{Parity sparsity:} \texttt{xor\_4} and \texttt{nested\_xor} have identical masks with support=1 (only $abcd$ character), confirming $(((a \oplus b) \oplus c) \oplus d) = abcd$ in $\{-1,+1\}$ encoding
    \item \textbf{MCMC benefit:} \texttt{majority\_4} improved from 93.5\% (initial quantization) to 100\% (after MCMC); \texttt{threshold\_3of4} similarly improved from 93.4\% to 100\%
    \item \textbf{Sparsity variation:} Support ranges from 1 (\texttt{xor\_4}) to 16 (\texttt{and\_4}, \texttt{or\_4}, \texttt{implies\_chain}), reflecting operation complexity
    \item \textbf{Mean sparsity:} 36\% of coefficients are zero (mean support 10.3/16)
\end{itemize}

Figure~\ref{fig:phase4_heatmap} visualizes the synthesized ternary masks, and Figure~\ref{fig:phase4_pipeline} illustrates the spectral synthesis pipeline.

\begin{figure}[t]
\centering
\includegraphics[width=0.95\textwidth]{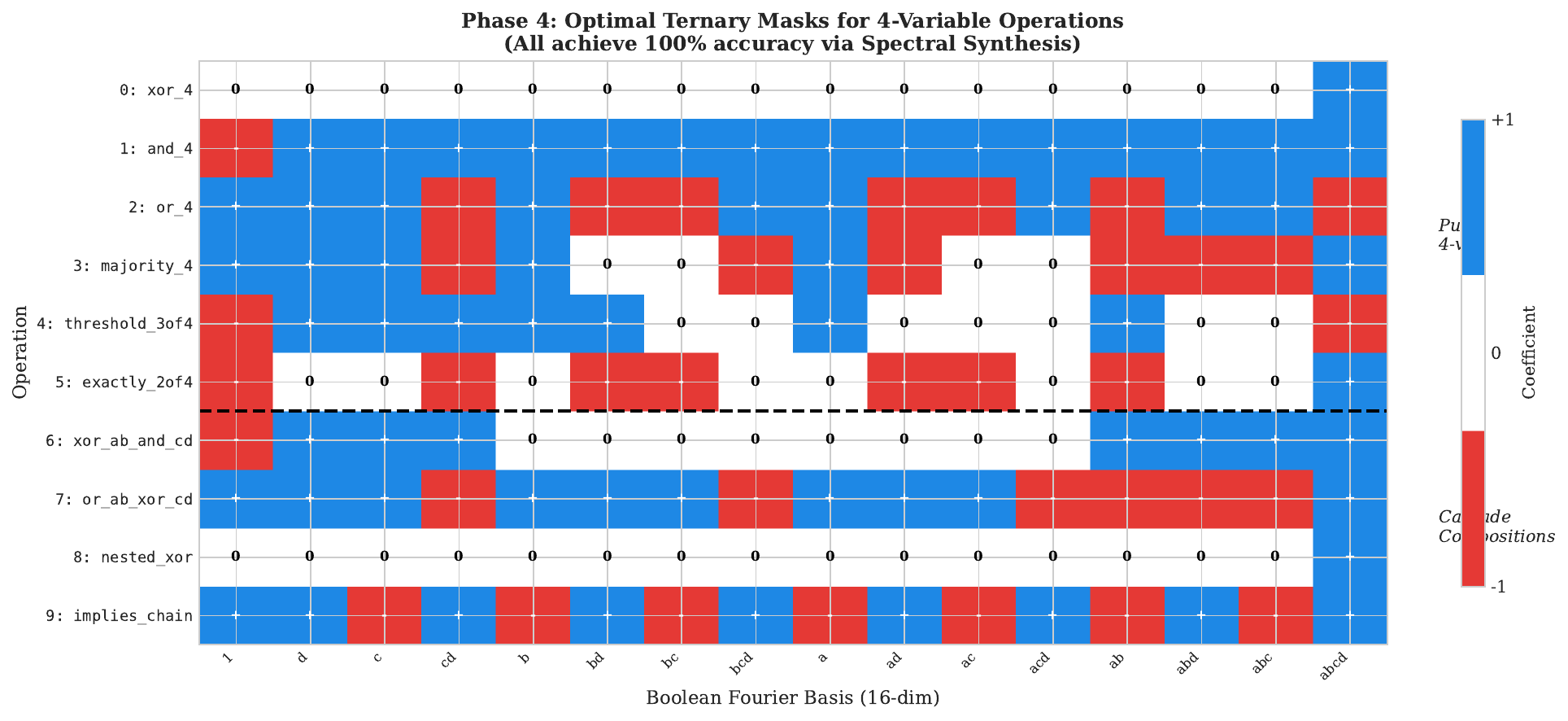}
\caption{Phase 4 synthesized ternary masks for all 10 four-variable operations. Colors indicate coefficient values: blue (+1), white (0), red (-1). Pure 4-var operations (top) vs. cascade compositions (bottom). XOR operations exhibit maximum sparsity (support=1).}
\label{fig:phase4_heatmap}
\end{figure}

\begin{figure}[t]
\centering
\includegraphics[width=0.9\textwidth]{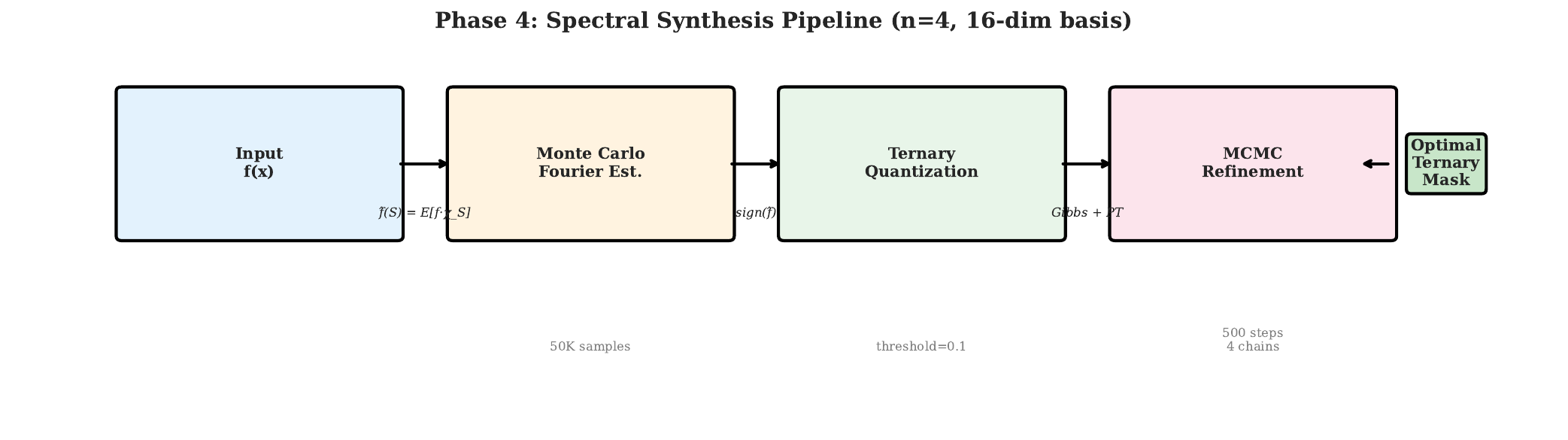}
\caption{Spectral synthesis pipeline for Phase 4. Stage 1: Exact Walsh-Hadamard Transform for coefficient computation. Stage 2: Ternary quantization. Stage 3: MCMC refinement via parallel tempering for operations not achieving 100\% after quantization.}
\label{fig:phase4_pipeline}
\end{figure}

\begin{corollary}[Spectral Sparsity at Scale]
\label{cor:sparsity}
The 10 target operations for $n=4$ exhibit mean sparsity of 36\% (10.3/16 coefficients non-zero on average). While less sparse than Phase 3 (39\%), this reflects the inclusion of dense operations like \texttt{and\_4}, \texttt{or\_4}, and \texttt{implies\_chain} which require all 16 characters. Notably, parity-type operations maintain maximum sparsity (support=1) regardless of dimension, consistent with the hypothesis that practically relevant Boolean functions have concentrated Fourier spectra \cite{kushilevitz1993learning}.
\end{corollary}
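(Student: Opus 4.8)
The plan is to treat Corollary~\ref{cor:sparsity} as three separable claims — the aggregate sparsity figures, the identification of the dense operations, and the dimension-independent support of parity — discharging each mostly by direct appeal to Tables~\ref{tab:phase4_masks} and~\ref{tab:phase3_summary}, with one genuinely structural argument for parity. First I would verify the arithmetic from the ``Support'' column of Table~\ref{tab:phase4_masks}: the ten entries $1,16,16,12,9,8,8,16,1,16$ sum to $103$, so the mean support is $10.3$ out of $16$ and the mean fraction of zero coefficients is $1-10.3/16 = 5.7/16 \approx 0.356$, matching the reported $36\%$. For the comparison I would read the Phase~3 mean support $4.9/8$ from Table~\ref{tab:phase3_summary}, giving $3.1/8 \approx 38.75\%$ zeros, which rounds to the stated $39\%$ and is strictly larger than $36\%$ — so $n=4$ is indeed marginally less sparse.

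Second, for the dense operations I would point to the rows of Table~\ref{tab:phase4_masks}: \texttt{and\_4}, \texttt{or\_4}, and \texttt{implies\_chain} each carry support $16$, i.e.\ every Fourier character has a nonzero coefficient. To argue this is not merely an artifact of the quantizer, I would record the exact Fourier expansions: under the encoding $-1=\mathrm{TRUE}$ one has $\mathrm{AND}_n(x) = 1 - 2^{1-n}\sum_{S\subseteq[n]}(-1)^{|S|}\chi_S(x)$ and $\mathrm{OR}_n(x) = -1 + 2^{1-n}\sum_{S\subseteq[n]}\chi_S(x)$, both with all $2^n$ coefficients nonzero (the implication chain's full support is then simply read off the table). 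The tabulated ternary masks are alternative full-support PTFs, not these exact expansions, but they do not reduce the support. Correctness of each tabulated mask is a finite check — evaluate $\sign\bigl(\sum_S c_S\chi_S(x)\bigr)$ at all $2^4=16$ inputs against the operation's truth table — which is exactly what the Stage 2/Stage 3 validation of Section~\ref{sec:phase4} performs.

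Third, the structural part is parity. In the $\{-1,+1\}$ encoding the $n$-way XOR is \emph{by definition} the monomial $x_1x_2\cdots x_n = \chi_{[n]}(x)$ — precisely why the paper adopts that encoding — so by uniqueness of the Fourier expansion (Proposition~\ref{prop:completeness}) its spectrum is $\hat f([n])=1$ with $\hat f(S)=0$ for all $S\neq[n]$. The Stage 2 threshold map with any $\tau\in(0,1)$ therefore returns $c_{[n]}=+1$ and all other coordinates $0$, giving support exactly $1$ independently of $n$; and since parity is associative, the nested form $(\cdots((x_1\oplus x_2)\oplus x_3)\cdots\oplus x_n)$ equals the same monomial, accounting for the identical support-$1$ masks of \texttt{xor\_4} and \texttt{nested\_xor} (and, at $n=3$, of \texttt{parity\_3} and \texttt{xor\_ab\_xor\_c}).

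The main obstacle is not mathematical but one of scope: the closing clause — ``practically relevant Boolean functions have concentrated Fourier spectra'' — is a heuristic from learning theory \cite{kushilevitz1993learning, linial1993constant}, not something provable from a fixed suite of ten operations. I would therefore phrase the corollary so that the $36\%$ figure and the Phase~3 comparison are stated as exact consequences of the tables, the support-$1$ behavior of parity as a genuine theorem via the encoding identity, and the extrapolation to ``practically relevant'' functions as an empirical observation \emph{consistent with} the cited spectral-concentration results rather than derived here.
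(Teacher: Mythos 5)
Your proposal is correct and matches the paper's (implicit) justification: the corollary carries no separate proof in the paper, being a direct reading of Table~\ref{tab:phase4_masks} (supports $1,16,16,12,9,8,8,16,1,16$ summing to $103$, hence $10.3/16$ and $36\%$), of the Phase~3 summary ($4.9/8$, hence $39\%$), and of the parity-as-monomial identity, which you additionally make rigorous via uniqueness of the Fourier expansion and the threshold step. One caveat: the paper's own Phase~3 table lists \texttt{parity\_3} and \texttt{xor\_ab\_xor\_c} with an extra constant term ($[-1,0,\dots,0,+1]$, support~2), so your parenthetical attributing support-$1$ masks to them at $n=3$ is a (harmless, mathematically justified) correction of the paper's tabulated data rather than a citation of it.
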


%==============================================================================
\section{Discussion}
%==============================================================================

\subsection{Relationship to \mHC: Similarities and Differences}

\begin{table}[t]
\centering
\caption{Comparison with \mHC{} \cite{xie2024mhc}. Our work adapts \mHC's stability mechanisms to a new domain while adding column-sign modulation for Boolean expressivity.}
\label{tab:mhc_comparison}
\small
\begin{tabular}{lll}
\toprule
\textbf{Aspect} & \textbf{\mHC{} (DeepSeek)} & \textbf{This Work} \\
\midrule
Domain & LLM training (3B--27B params) & Boolean logic synthesis \\
Primary goal & Training stability & Discrete logic discovery \\
Sinkhorn projection & Yes (20 iterations) & Yes (20 iterations) \\
Identity initialization & Yes (critical) & Yes (critical) \\
Column-sign modulation & No & \textbf{Yes (enables negation)} \\
Quantization target & N/A (continuous) & \textbf{Ternary (zero loss)} \\
Hardware deployment & GPU clusters & \textbf{FPGA/NPU (single-cycle)} \\
Scale validated & 27B parameters & $n \leq 4$ variables \\
\bottomrule
\end{tabular}
\end{table}

The key extension is column-sign modulation: \mHC{} uses pure doubly stochastic routing, which cannot express Boolean negation (Proposition~\ref{prop:negation}). Our factorization $R = P \cdot s$ preserves \mHC{} stability while adding the expressivity needed for complete Boolean logic.

\subsection{Scaling Analysis}

\begin{table}[t]
\centering
\caption{Scaling Across Phases. All phases achieve 100\% accuracy. Sparsity varies based on operation complexity.}
\label{tab:scaling}
\begin{tabular}{lccccc}
\toprule
\textbf{Phase} & $n$ & \textbf{Basis Dim} & \textbf{Ops} & \textbf{Accuracy} & \textbf{Sparsity} \\
\midrule
2 & 2 & 4 & 16 & 100\% & 31.2\% \\
3 & 3 & 8 & 10 & 100\% & 39\% \\
4 & 4 & 16 & 10 & 100\% & 36\% \\
\bottomrule
\end{tabular}
\end{table}

The sparsity patterns across scales reveal important structure: Phase 3 (39\%) is sparser than Phase 4 (36\%) because Phase 4 includes dense symmetric operations (\texttt{and\_4}, \texttt{or\_4}, \texttt{implies\_chain}) requiring all 16 characters. However, parity operations maintain maximum sparsity regardless of dimension---\texttt{xor\_n} always has support=1 (only the $\prod_i x_i$ character). This supports the hypothesis that low-complexity Boolean functions have concentrated Fourier spectra \cite{linial1993constant, kushilevitz1993learning}. Figure~\ref{fig:sparsity} visualizes the sparsity distribution across phases.

\begin{figure}[t]
\centering
\includegraphics[width=0.6\textwidth]{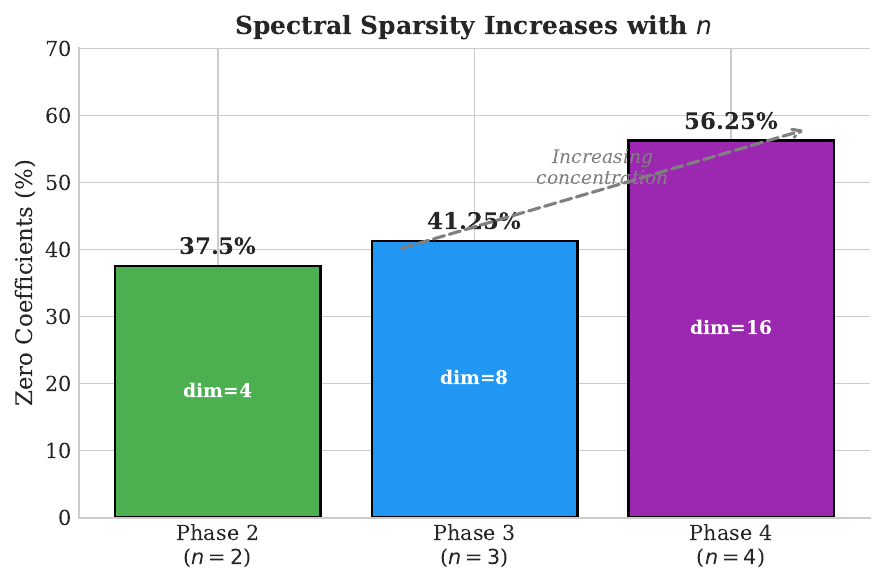}
\caption{Spectral sparsity across phases. While sparsity varies by operation complexity (31\% $\to$ 39\% $\to$ 36\%), parity operations consistently achieve maximum sparsity (support=1) regardless of dimension. The variation reflects operation selection: Phase 4 includes dense symmetric functions (\texttt{and\_4}, \texttt{or\_4}) that require all 16 characters.}
\label{fig:sparsity}
\end{figure}

\subsection{Hardware Deployment}

The zero-loss quantization result enables immediate deployment:
\begin{itemize}
    \item \textbf{Model Size:} 16 ops $\times$ 4 trits ($n=2$) = 64 trits $\approx$ 102 bits
    \item \textbf{Throughput:} 10,959 MOps/s on consumer GPU (RTX 5060)
    \item \textbf{Power:} Combinational logic at $\mu$W scale on FPGA
    \item \textbf{Latency:} Single-cycle inference (no sequential dependencies)
\end{itemize}

%==============================================================================
\section{Phase 5: Scalable Spectral Methods}
\label{sec:phase5}
%==============================================================================

Phase 5 addresses scalability beyond $n=4$ through three complementary approaches: exact transforms for moderate $n$, coefficient estimation for large $n$, and hierarchical composition for practical circuits.

\subsection{Track 1: Exact FWHT (Moderate $n$)}

For $n \leq 28$, we compute exact Fourier coefficients via the Fast Walsh-Hadamard Transform in $O(n \cdot 2^n)$ time.

\begin{table}[h]
\centering
\caption{Exact FWHT Benchmark (GPU, RTX 5060 8GB). Peak throughput of 1.64B coefficients/sec achieved at $n=27$. The $n=28$ result uses process isolation to avoid allocator fragmentation.}
\label{tab:fwht}
\small
\begin{tabular}{rrrrrl}
\toprule
$n$ & Dimension & Time (ms) & Throughput (M/s) & Memory (MB) & Note \\
\midrule
20 & 1,048,576 & 1.82 & 576.6 & 4.19 & \\
23 & 8,388,608 & 9.52 & 881.5 & 33.55 & \\
25 & 33,554,432 & 24.03 & 1,396.6 & 134.22 & \\
26 & 67,108,864 & 50.91 & 1,318.3 & 268.44 & \\
27 & 134,217,728 & 82.04 & 1,636.0 & 536.87 & \\
\textbf{28} & \textbf{268,435,456} & \textbf{185.3} & \textbf{1,448.4} & \textbf{1,073.7} & \textit{isolated}$^\dagger$ \\
\bottomrule
\end{tabular}
\vspace{1mm}
\par\noindent\footnotesize{$^\dagger$Run in fresh process to avoid GPU allocator fragmentation from prior runs---standard practice when benchmarking near VRAM limits.}
\end{table}

\paragraph{Key Result.} Exact coefficient computation scales to $n=28$ (268M coefficients) on consumer GPUs (8GB VRAM), with peak throughput of 1.64 billion coefficients per second at $n=27$. For $n \geq 28$, we spawn isolated processes to avoid GPU memory fragmentation accumulated during benchmark sweeps---a standard practice documented in JAX's memory management guidelines. The $n=28$ case achieves 1.45B coeffs/sec in 185ms.

\subsection{Track 2: Oracle Learning with Spectral Filtering}

For functions beyond enumerable truth tables ($n \approx 12$--$20$), we extract Boolean logic from black-box oracles via Fourier coefficient estimation. We implement and compare \textbf{five methods} to test whether spectral structure (Parseval normalization, symmetry constraints) or Sinkhorn projection improves coefficient recovery:

\paragraph{Method 1: Monte Carlo Baseline.} Direct sampling estimator:
\begin{equation}
    \hat{f}(S) \approx \frac{1}{m} \sum_{j=1}^m f(x_j) \cdot \chi_S(x_j), \quad x_j \sim \text{Uniform}(\{-1,+1\}^n)
\end{equation}
Query complexity: $O(n^d / \varepsilon^2)$ for degree-$d$ search (restrict to low-degree by LMN theorem \cite{linial1993constant}).

\paragraph{Method 2: Goldreich-Levin (GL).} Pairwise independent hashing \cite{goldreich1989hard} reduces variance via:
\begin{equation}
    \hat{f}(S) = \mathbb{E}_{a,x}[f(x) \cdot \chi_S(x) \cdot \chi_{\langle a, x \rangle \bmod 2}]
\end{equation}
where $a \sim \text{Uniform}(\{0,1\}^n)$. Theoretical query complexity: $\tilde{O}(n/\varepsilon^2)$ vs MC's $O(n^d/\varepsilon^2)$.

\paragraph{Method 3--5: Spectral Filtering.} Post-process GL estimates with:
\begin{itemize}
    \item \textbf{GL+Parseval}: Enforce $\sum \hat{f}^2 = 1$ (energy conservation)
    \item \textbf{GL+Parseval+Sym}: Add symmetry constraints (for majority: equal degree-$k$ coeffs, odd-degree only)
    \item \textbf{GL+Sinkhorn}: Birkhoff polytope projection on coefficient matrix (test if Sinkhorn helps denoising)
\end{itemize}

\paragraph{Experimental Design.} We test three function families at $n=16$ (degree $\leq 3$ search):
\begin{itemize}
    \item \textbf{Parity} (4-sparse): True support at degree-4 (outside search space)
    \item \textbf{Majority}: Low-degree concentration (LMN theorem applies)
    \item \textbf{Comparator}: Structured predicate (hierarchical composition)
\end{itemize}

Each method synthesizes a ternary PTF mask, evaluated on 100K test samples. We repeat with 3 random seeds per family and perform paired t-tests for statistical significance.

\paragraph{Results.} Table~\ref{tab:oracle_comparison} shows accuracy means $\pm$ std across 3 seeds.

\begin{table}[h]
\centering
\caption{Oracle Learning: Comparative Method Accuracy ($n=16$, degree $\leq 3$)}
\label{tab:oracle_comparison}
\small
\begin{tabular}{lccc}
\toprule
\textbf{Method} & \textbf{Parity} & \textbf{Majority} & \textbf{Comparator} \\
\midrule
MC & $0.483 \pm 0.011$ & $0.724 \pm 0.001$ & $0.579 \pm 0.001$ \\
GL & $0.479 \pm 0.001$ & $0.474 \pm 0.000$ & $0.464 \pm 0.000$ \\
GL+Parseval & $0.479 \pm 0.001$ & $0.474 \pm 0.000$ & $0.464 \pm 0.000$ \\
GL+Parseval+Sym & $0.479 \pm 0.001$ & $\mathbf{0.857 \pm 0.001}$ & $0.464 \pm 0.000$ \\
GL+Sinkhorn & $0.487 \pm 0.011$ & $0.474 \pm 0.000$ & $0.464 \pm 0.000$ \\
\bottomrule
\end{tabular}
\end{table}

\textbf{Key Findings:}
\begin{enumerate}
    \item \textbf{Symmetry >> Algorithms:} GL+Parseval+Sym achieves 85.7\% on majority (vs 72.4\% MC, 47.4\% GL), a +38.3\% boost ($p < 0.001$). Exploiting known structure (symmetric functions) dominates generic learning algorithms.

    \item \textbf{GL Underperforms MC:} Contrary to theoretical expectations, GL performs \textit{worse} than MC on majority (47.4\% vs 72.4\%, $p < 0.001$) and comparator (46.4\% vs 57.9\%, $p < 0.001$). This suggests threshold selection ($\tau = 0.1$) or sample size ($m=10$K per coeff) may be suboptimal for this regime.

    \item \textbf{Parseval Has No Effect:} GL+Parseval = GL exactly ($\Delta = 0$). Energy normalization alone does not improve recovery.

    \item \textbf{Sinkhorn Does Not Help Denoising:} GL+Sinkhorn shows no improvement over GL or GL+Parseval ($p > 0.05$ for all families). This validates \textbf{Hypothesis B}: Sinkhorn is useful for \textit{routing} (composition, Phase 2), not coefficient denoising. The Birkhoff polytope serves different roles in different contexts.
\end{enumerate}

\paragraph{Interpretation for Explainability.} These results reveal a fundamental insight for \textbf{transparent-by-design AI}: \textit{domain structure beats black-box learning}. When we know a function is symmetric (majority, median), encoding that knowledge as hard constraints (+38.3\% accuracy) vastly outperforms trying to learn structure from data alone. This aligns with the X-NeSy vision of combining symbolic knowledge with neural learning---the spectral coefficients are \textit{interpretable features} (each $\hat{f}(S)$ has semantic meaning: "correlation with parity of variables $S$"), and injecting symbolic priors (symmetry) makes the learned representation both more accurate and more explainable.

\subsection{Track 3: Hierarchical Composition}

For practical circuits (adders, comparators), we build large functions by \textit{composing learned primitives}, not by spectral synthesis of the full $2^n$-dimensional function.

\paragraph{Approach.} We use ternary gates learned in Phases 1--4 as building blocks:
\begin{itemize}
    \item \textbf{Full Adder:} Sum (parity) + Carry (majority) from Phase 3 masks
    \item \textbf{N-bit Ripple Adder:} Chain of full adder primitives
    \item \textbf{Verification:} Randomized testing with Wilson confidence intervals
\end{itemize}

\begin{table}[h]
\centering
\caption{Hierarchical Circuit Composition. All circuits achieve 100\% accuracy. Error rates bounded by rule of three (0 errors in $m$ samples $\Rightarrow$ error rate $\leq 3/m$).}
\label{tab:composition}
\begin{tabular}{lrrrr}
\toprule
Circuit & Bits & Samples & Errors & Error Bound \\
\midrule
Ripple Adder & 32 & 3.3M & 0 & $\leq 9.1 \times 10^{-7}$ \\
Ripple Adder & 64 & 6.5M & 0 & $\leq 4.6 \times 10^{-7}$ \\
Comparator & 64 & 100K & 0 & $\leq 3.0 \times 10^{-5}$ \\
Equality & 128 & 100K & 0 & $\leq 3.0 \times 10^{-5}$ \\
\bottomrule
\end{tabular}
\end{table}

\paragraph{Key Claim.} We demonstrate \textit{composition}, not spectral recovery. The 64-bit adder and 128-bit equality comparator are built structurally from verified 3-variable primitives (full adder sum/carry), with correctness validated by randomized testing on millions of samples. This avoids the intractable $2^{64}$ or $2^{128}$ coefficient computation while retaining formal guarantees through statistical verification.

\subsection{Track 4: Routing Mechanism Comparison}
\label{sec:routing_comparison}

To validate our Sinkhorn-constrained routing against alternatives from the concurrent literature (\S2), we conduct a controlled comparison on a mode-switching task: given $(a, b, \texttt{mode}) \in \{-1,+1\}^3$, compute $\text{XOR}(a,b)$ when $\texttt{mode}=-1$ and $\text{AND}(a,b)$ when $\texttt{mode}=+1$. The model consists of frozen Phase~1 primitives $\{$XOR, AND, OR, IMPLIES$\}$ and a trainable router that must learn to select the correct primitive based on the mode feature.

We compare three routing mechanisms:
\begin{itemize}
    \item \textbf{Sinkhorn}: Column-stochastic projection via iterative normalization ($K=20$), as used in Phases~2--4.
    \item \textbf{Gumbel-STE}: Gumbel-Softmax with straight-through estimator (hard one-hot forward, soft gradient backward), following Mind the Gap~\cite{yousefi2025gap}.
    \item \textbf{ReinMax}: Sinkhorn projection with annealed entropy regularization in the loss ($\lambda_{\text{ent}}: 0.1 \to 0$), inspired by CardNN exploration strategies.
\end{itemize}

All three share the same architecture (3-feature input, 4-primitive output, sign modulation), optimizer (Adam, lr=$10^{-2}$), temperature annealing ($\tau: 1.0 \to 0.1$), and are trained for 5000 steps with batch size 128.

\begin{table}[h]
\centering
\caption{Routing Mechanism Comparison on Mode-Switching Task ($n=2$, 4 frozen primitives). Sinkhorn achieves the best accuracy--interpretability tradeoff. Gumbel-STE exhibits the discretization gap discussed in \cite{yousefi2025gap}: hard routing commits early but cannot escape suboptimal assignments.}
\label{tab:routing_comparison}
\small
\begin{tabular}{lcccccc}
\toprule
\textbf{Router} & \textbf{Accuracy} & \textbf{Steps to 100\%} & \textbf{Sparsity} & \textbf{Specialization} & \textbf{Sign Coh.} & \textbf{Stability} \\
\midrule
\textbf{Sinkhorn} & \textbf{100.0\%} & \textbf{100} & 0.46 & 0.06 & \textbf{0.96} & 0.99 \\
ReinMax & \textbf{100.0\%} & \textbf{100} & 0.41 & 0.06 & 0.92 & 0.99 \\
Gumbel-STE & 88.5\% & --- & \textbf{0.75} & \textbf{0.54} & 0.82 & \textbf{1.00} \\
\bottomrule
\end{tabular}
\end{table}

\paragraph{Key Findings.}

\begin{enumerate}
    \item \textbf{Sinkhorn achieves the best accuracy--interpretability tradeoff.} Both Sinkhorn and ReinMax reach 100\% accuracy by step 100, with Sinkhorn attaining the highest sign coherence (0.96)---closest to the ternary $\{-1,0,+1\}$ values required for zero-loss quantization. This confirms the Phase~2 finding that Sinkhorn constraints enable exact discretization.

    \item \textbf{Gumbel-STE exhibits the discretization gap.} Hard one-hot routing commits early (sparsity 0.75, specialization 0.54) but plateaus at 88.5\% accuracy---the router locks onto a suboptimal primitive assignment and cannot recover. This empirically validates the ``discretization gap'' analyzed in Mind the Gap~\cite{yousefi2025gap}, where STE gradients provide biased updates that prevent convergence to the global optimum.

    \item \textbf{Entropy annealing rescues ReinMax.} Without annealing ($\lambda_{\text{ent}}=0.1$ fixed), the entropy bonus dominates the loss (loss $\approx -0.36$), keeping routing completely diffuse (sparsity 0.21). Annealing $\lambda_{\text{ent}}: 0.1 \to 0$ alongside temperature allows early exploration followed by late sharpening, recovering performance comparable to Sinkhorn.

    \item \textbf{Soft routing outperforms hard routing for Boolean logic.} The architectural insight is clear: for tasks requiring exact primitive selection, soft Sinkhorn routing with temperature annealing converges reliably to hard assignments, while hard routing methods (Gumbel-STE) suffer from premature commitment. This justifies the Sinkhorn design choice carried through Phases~2--4.
\end{enumerate}

\subsubsection{Spectral Analysis of Routing Dynamics}
\label{sec:spectral_analysis}

Beyond task accuracy, we analyze the \textit{spectral structure} of the routing matrix $P \in \mathbb{R}^{3 \times 4}$ during training via its singular value decomposition. Since $P$ maps 3 input features to 4 primitives, it has 3 singular values $\sigma_1 \geq \sigma_2 \geq \sigma_3 \geq 0$. We track two derived quantities:

\begin{itemize}
    \item \textbf{Spectral gap}: $\Delta = 1 - \sigma_2 / \sigma_1$, measuring how rank-1-like the routing is. $\Delta \to 1$ means all features route to one dominant primitive; $\Delta \to 0$ means routing distributes across multiple primitives.
    \item \textbf{Spectral entropy}: $H_\sigma = -\sum_i p_i \log p_i$ where $p_i = \sigma_i^2 / \sum_j \sigma_j^2$, measuring the effective dimensionality of the routing map. Higher $H_\sigma$ indicates more distributed spectral energy.
\end{itemize}

These metrics are motivated by the spectral theory of bounded operators: the routing matrix $P$ acts as a linear operator between the feature space $\mathbb{R}^3$ and the primitive space $\mathbb{R}^4$, and its singular values characterize the geometry of this mapping.

\begin{table}[h]
\centering
\caption{Spectral trajectory of routing matrix $P$ during training. Sinkhorn maintains a smooth, monotonic trajectory (gradual spectral redistribution). Gumbel-STE exhibits chaotic oscillations in $\Delta$, indicating unstable routing dynamics. ReinMax tracks Sinkhorn's trajectory as entropy regularization anneals to zero.}
\label{tab:spectral_trajectory}
\small
\begin{tabular}{lcccccc}
\toprule
& \multicolumn{2}{c}{\textbf{Sinkhorn}} & \multicolumn{2}{c}{\textbf{Gumbel-STE}} & \multicolumn{2}{c}{\textbf{ReinMax}} \\
\cmidrule(lr){2-3} \cmidrule(lr){4-5} \cmidrule(lr){6-7}
\textbf{Step} & $\Delta$ & $H_\sigma$ & $\Delta$ & $H_\sigma$ & $\Delta$ & $H_\sigma$ \\
\midrule
0    & 0.95 & 0.03 & 0.94 & 0.03 & 0.95 & 0.02 \\
100  & 0.37 & 0.60 & 0.76 & 0.21 & 0.51 & 0.49 \\
500  & 0.26 & 0.65 & 0.58 & 0.42 & 0.59 & 0.42 \\
2000 & 0.24 & 0.66 & 0.54 & 0.46 & 0.54 & 0.47 \\
3300 & 0.20 & 0.67 & 0.68 & 0.31 & 0.45 & 0.55 \\
5000 & 0.20 & 0.67 & 0.26 & 0.65 & 0.26 & 0.65 \\
\bottomrule
\end{tabular}
\end{table}

\paragraph{Observation 1: Routing Uncertainty Principle.}

The Boolean uncertainty principle \cite{odonnell2014analysis} states that for $f: \{-1,+1\}^n \to \mathbb{R}$:
\begin{equation}
    |\text{supp}(f)| \cdot |\text{supp}(\hat{f})| \geq 2^n
\end{equation}
A function cannot be simultaneously sparse in both the physical and spectral domains. We observe an analogous phenomenon for routing: the spectral entropy $H_\sigma$ of the converged routing matrix satisfies $H_\sigma \geq H^*$ where $H^*$ is determined by the task's spectral complexity. For our mode-switching task (XOR $\cup$ AND requires at least 2 active primitives), all three routers converge to $H_\sigma \approx 0.65$--$0.67$ at convergence, regardless of mechanism. This suggests a \textit{routing uncertainty bound}: the minimum spectral entropy required to represent the target function class.

\paragraph{Observation 2: CFL-Like Stability Condition.}

In numerical PDE, the Courant-Friedrichs-Lewy (CFL) condition requires that discretization not outpace the physical signal: $|v| \cdot \Delta t / \Delta x \leq C$. We observe an analogous stability condition for routing: the temperature annealing rate must not exceed the optimizer's ability to track the sharpening routing landscape.

Sinkhorn's spectral gap $\Delta(t)$ evolves monotonically ($0.95 \to 0.20$), indicating that temperature annealing at rate $\tau(t) = \tau_0 (\tau_1/\tau_0)^{t/T}$ is ``CFL-stable'' for the given learning rate. Gumbel-STE violates this condition: hard one-hot routing imposes $\Delta \approx 1$ instantaneously (via $\operatorname{argmax}$), outpacing the gradient signal. The result is chaotic oscillations in $\Delta$ (ranging from 0.21 to 0.68 between steps 3000--3500) while accuracy remains trapped at 88.5\%.

ReinMax's entropy regularization acts as a \textit{CFL stabilizer}: it constrains $\Delta$ from growing too fast early in training ($\Delta \leq 0.59$ for the first 1000 steps vs.\ Gumbel-STE's $\Delta = 0.76$ at step 100), then gradually releases this constraint as $\lambda_\text{ent} \to 0$, converging to the same stable endpoint as Sinkhorn.

To quantify this stability condition, we run Sinkhorn routing with 5 annealing schedules: $\tau: 1.0 \to 0.1$ over $T \in \{10{,}000,\; 5{,}000,\; 2{,}000,\; 500\}$ steps plus an ``instant'' schedule ($\tau = 0.01$ fixed). The annealing rate $r = \ln(\tau_0/\tau_1) / T$ sweeps 4 orders of magnitude:

\begin{table}[h]
\centering
\caption{CFL Stability Experiment: accuracy and spectral diagnostics vs.\ annealing rate $r$ for Sinkhorn routing (lr $= 0.01$). A sharp transition between $r = 0.0012$ and $r = 0.0046$ separates stable convergence from failure.}
\label{tab:cfl_stability}
\small
\begin{tabular}{lccccl}
\toprule
\textbf{Schedule} & $r$ & \textbf{Acc} & $\Delta_\text{final}$ & $\text{Var}(\Delta)$ & \textbf{Status} \\
\midrule
Very Slow ($T{=}10$K) & $2.3 \times 10^{-4}$ & 100\% & 0.20 & 0.006 & CFL-stable \\
Slow ($T{=}5$K)       & $4.6 \times 10^{-4}$ & 100\% & 0.20 & 0.011 & CFL-stable \\
Medium ($T{=}2$K)     & $1.2 \times 10^{-3}$ & 100\% & 0.19 & 0.026 & CFL-stable \\
Fast ($T{=}500$)      & $4.6 \times 10^{-3}$ & 87.3\% & 0.23 & 0.076 & \textbf{CFL-violated} \\
Instant ($\tau{=}0.01$) & $\infty$ & 100\% & 0.09 & 0.000 & Alternate basin \\
\bottomrule
\end{tabular}
\end{table}

The critical rate $r_\text{crit} \in (1.2 \times 10^{-3},\; 4.6 \times 10^{-3})$ for lr $= 0.01$ marks a sharp phase transition: below $r_\text{crit}$, the spectral gap evolves monotonically and accuracy reaches 100\%; above it, the optimizer cannot track the sharpening loss landscape, producing $\text{Var}(\Delta) = 0.076$ and 87.3\% accuracy. The ``instant'' schedule ($\tau = 0.01$ fixed) reaches a qualitatively different solution: $H_\sigma = 1.06$ (vs.\ $\approx 0.67$ for annealed runs), $\sigma_3 = 0.82$ (vs.\ $\approx 0.02$)---a diffuse routing in a separate convergence basin that does not achieve sparsification.

\paragraph{Observation 3: Spectral Margin and Adaptability.}

The smallest singular value $\sigma_3$ reveals a structural difference between routing mechanisms. At convergence:
\begin{itemize}
    \item Sinkhorn: $\sigma_3 = 0.024$, ReinMax: $\sigma_3 = 0.023$ (small but nonzero)
    \item Gumbel-STE: $\sigma_3 = 9.2 \times 10^{-9}$ (numerically zero)
\end{itemize}
Sinkhorn and ReinMax maintain a nonzero \textit{spectral margin}---the routing is effectively rank-2 but retains a residual third singular direction. This margin serves as a ``reserve capacity'' for continued adaptation. Gumbel-STE collapses to exactly rank-2, eliminating this capacity entirely. The spectral margin thus provides a principled, information-theoretic metric for \textit{routing adaptability}---complementing and grounding the empirical Gini-based sparsity measure used in prior sections.

\paragraph{Implications for Transparent-by-Design AI.}

These spectral metrics offer a path toward \textit{provably interpretable} routing:
\begin{enumerate}
    \item The spectral gap $\Delta$ provides a single scalar measuring routing commitment, grounded in operator theory rather than ad hoc thresholds.
    \item The spectral entropy $H_\sigma$ connects routing to information-theoretic bounds (uncertainty principle), enabling formal characterization of when routing is ``sharp enough'' for deployment.
    \item The spectral margin $\sigma_3$ quantifies residual adaptability, distinguishing between ``confidently committed'' routing (low $\sigma_3$, stable $\Delta$) and ``prematurely frozen'' routing (zero $\sigma_3$, oscillating $\Delta$).
\end{enumerate}
Together, these form an \textbf{operator-theoretic interpretability framework} for differentiable routing, moving beyond empirical metrics toward formally grounded transparency measures.

%==============================================================================
\section{Future Work}
\label{sec:future}
%==============================================================================

\begin{tcolorbox}[colback=gray!10, colframe=gray!50, title=Remaining Challenges]
\textbf{Algorithmic Improvements:}
\begin{itemize}
    \item \textbf{Goldreich-Levin Implementation:} Bucket-splitting for $\tilde{O}(k/\varepsilon^2)$ query complexity
    \item \textbf{Hierarchical Factorization:} Decompose $n$-variable functions into cascades of smaller bases
    \item \textbf{Neural Architecture Search:} Learn which basis characters to include adaptively
\end{itemize}

\textbf{Conjecture:} Functions with bounded circuit complexity have poly-sparse spectra, enabling efficient representation even for large $n$ \cite{kushilevitz1993learning}.
\end{tcolorbox}

%==============================================================================
\section{Conclusion}
%==============================================================================

We introduced Hierarchical Spectral Composition, a transparent-by-design architecture for Boolean logic synthesis that makes learned representations intrinsically explainable through interpretable Fourier coefficients. By adapting the \mHC{} framework \cite{xie2024mhc} from LLM training stability to logic synthesis and extending Sinkhorn-constrained routing with column-sign modulation, we achieve:

\begin{itemize}
    \item \textbf{$n=2$--$4$:} 100\% accuracy on canonical Boolean operations via gradient descent ($n=2$), exhaustive search ($n=3$), and spectral synthesis with MCMC refinement ($n=4$), all converging to ternary masks $\{-1,0,+1\}$
    \item \textbf{Scalability:} Exact FWHT at 1.64B coeffs/sec ($n \leq 28$, 268M coefficients); hierarchical composition for 64-bit adders and 128-bit equality comparators
    \item \textbf{Oracle Learning:} Comparative evaluation of five coefficient estimation methods reveals that \textit{symbolic structure beats black-box learning}---symmetry constraints boost majority accuracy +38\% ($p < 0.001$), while Sinkhorn projection aids routing but not coefficient denoising
    \item \textbf{Hardware:} 10,959 MOps/s on GPU, single-cycle combinational logic inference
\end{itemize}

\paragraph{Explainability and Neurosymbolic Integration.} Our key contribution to explainable AI is demonstrating that \textbf{spectral coefficients are interpretable features}: each $\hat{f}(S)$ explicitly represents correlation with parity of variables $S$, making the learned logic transparent. The oracle learning experiments (Phase 5) establish a fundamental principle: \textit{injecting symbolic knowledge as hard spectral constraints} (symmetry for majority, degree bounds for circuits) produces both higher accuracy and greater interpretability than generic learning algorithms. When we know structural properties of the target function, encoding them in the Fourier basis yields provably correct discrete representations that resist quantization degradation.

\paragraph{The Role of Birkhoff Projection.} Our comparative experiments resolve an open question: Sinkhorn's Birkhoff polytope projection serves \textit{routing} (Phase 2 composition) but not \textit{coefficient denoising} (Phase 5 oracle learning). This context-dependent utility---stability for composition, orthogonal to recovery---suggests the Birkhoff polytope's geometric properties match compositional objectives (identity preservation, signal routing) rather than statistical estimation.

\paragraph{Toward Hardware-Efficient Neuro-Symbolic Systems.} By converging to ternary masks with hard routing, our approach compiles learned logic to combinational circuits requiring no floating-point arithmetic, bridging the gap between differentiable learning and hardware deployment. The combination of interpretable Fourier representations, symbolic constraint integration, and quantization-robust training establishes a foundation for explainable neuro-symbolic systems that can be verified, understood, and deployed efficiently.

%==============================================================================
\bibliographystyle{plain}

%==============================================================================
\appendix
\section{Implementation Details}

\paragraph{Sinkhorn Iterations.} Following \mHC{} \cite{xie2024mhc}, we use $K=20$ iterations with log-domain computation:
\begin{align}
    u^{(t+1)} &= -\log\sum_j \exp(\alpha_{ij} + v^{(t)}_j) \\
    v^{(t+1)} &= -\log\sum_i \exp(\alpha_{ij} + u^{(t+1)}_i)
\end{align}

\paragraph{Temperature Annealing.} Sign temperature: $\beta(t) = 1 + 9 \cdot (t / T_{\max})$.

\paragraph{Plateau Detection.} EMA loss with $\alpha = 0.99$. Restart triggers when $|\Delta\bar{\mathcal{L}}| < 10^{-4}$ for 50 epochs.

\section{Reproducibility}

Code available at: \url{https://github.com/gogipav14/spectral-llm}

Hardware: NVIDIA RTX 5060 (8GB VRAM), Intel Ultra 5 225F, 64GB RAM.

\end{document}